\documentclass{article}
\usepackage{tikz} 
 \usepackage[preprint]{neurips_2026}

\usepackage{graphicx}
\graphicspath{ {./images/} }
\usepackage[ruled]{algorithm2e}
\usepackage{subcaption} 

\usepackage{adjustbox}
\usepackage{multicol}
\usepackage{multirow}
\usepackage{tabularx}

\usepackage{xcolor}
\usepackage{color}
\usepackage{colortbl} 
\usepackage{inconsolata}
\usepackage{xcolor}
\definecolor{asparagus}{rgb}{0.53, 0.66, 0.42}

\usepackage{physics}
\usepackage{mathtools}
\usepackage{amsmath}
\usepackage{amssymb}

\usepackage[utf8]{inputenc} 
\usepackage[T1]{fontenc}    
\usepackage{hyperref}       
\usepackage{url}            
\usepackage{booktabs}       
\usepackage{amsfonts}       
\usepackage{nicefrac}       
\usepackage{microtype}      
\usepackage{xcolor}         

\usepackage{amsthm}
\newtheorem{theorem}{Theorem}
\newtheorem{definition}{Definition}
\newtheorem{corollary}{Corollary}
\newtheorem{lemma}{Lemma}

\newtheorem{assumption}{Assumption}
\makeatletter
\@addtoreset{theorem}{section}
\@addtoreset{definition}{section}
\@addtoreset{corollary}{section}
\@addtoreset{lemma}{section}
\@addtoreset{claim}{section}
\@addtoreset{property}{section}
\@addtoreset{remark}{section}
\makeatother

\usepackage{caption}
\usepackage{subcaption}

\usepackage{multicol}
\usepackage{wrapfig}
\usepackage{blindtext}

\usepackage{mdframed}

\usepackage{enumitem}

\title{Position: Avoid Overstretching LLMs for every Enterprise Task}

\author{
  Kuldeep Singh*\\
  Eka Labs AI,\\
  Germany\\
  \And
  Anson Bastos*\\
  Microsoft,\\
  India\\
  \And
  Isaiah Onando Mulang'\\
  SAP,\\
  Germany
}

\begin{document}

\maketitle

\begin{abstract}
Enterprise workloads are dominated by deterministic, structured, and knowledge-dependent tasks operating under strict cost, latency, and reliability constraints. While these are often addressed through large language model (LLM) deployment or distillation into smaller models, we argue this is inefficient, unreliable, and misaligned with enterprise task structures.
Instead, AI systems should treat language models as interfaces rather than monolithic engines, externalizing knowledge and computation into dedicated components for greater reliability, scalability, and transparency. Our theoretical evidences show that finite-capacity models cannot fully capture the breadth of knowledge required for enterprise tasks, creating inherent limits to efficiency and interpretability. Building on this, we take the position that 
language models should primarily be used for structured extraction in deterministic enterprise workflows,
while computation and storage are delegated to knowledge bases and symbolic procedures. We formally demonstrate that such modular architectures are more reliable and maintainable than monolithic frameworks, offering a sustainable foundation for enterprise tasks.

\end{abstract}

\section{Introduction}

Enterprise AI systems increasingly sit at the core of critical operational workflows, including deterministic patterns, policy compliance, incident triage, knowledge‑base question answering, and structured diagnostics. These workloads are characterized by repetitive structure, strong domain regularities with explicit constraints on latency, cost, privacy, and reliability. Despite this, recent trends suggest solving these problems by directly deploying large language models (LLMs) or by distilling their behavior into smaller models. While effective in some settings, this strategy raises fundamental questions of efficiency and methodological alignment with the underlying task structure.

At a practical level, LLM inference is expensive and difficult to scale under enterprise constraints, especially for continuous, high‑volume, and latency‑sensitive workloads. Methodologically, parametric language models are increasingly treated as monolithic repositories of both world knowledge and reasoning procedures, implicitly assuming that sufficient scale can collapse storage, retrieval, and computation into a single learned function. Even in agentic settings that ostensibly introduce reasoning, planning, grounding, and tool orchestration, these monolithic repertoires remain heavily deployed in multi-turn queries in which a single LLM serves as an oracle for several specialized tasks and an orchestrator of tools and memory. 
This design pattern resembles earlier monolithic software systems, where a single component attempts to internalize all functionality, whereas modern distributed systems evolved toward microservice architectures that separate concerns across specialized, independently governed components. 
Although current agentic methods incorporate retrieval and tool usage, these capabilities are typically orchestrated by an \emph{LLM acting as a unified policy engine}, leaving knowledge access and computation \emph{entangled within a shared parameterization}.
In this paper, we argue that this assumption is not merely inefficient but fundamentally flawed for a broad class of enterprise constrained deterministic tasks.
We propose a \emph{paradigm inversion}: external systems and explicit workflows should govern model invocation, relegating specialized SLMs to bounded interface roles rather than central orchestrators in structured enterprise tasks.

\paragraph{Position:}
We argue that directly deploying frontier LLMs, or distilling them into small language models (SLMs), is fundamentally misaligned with enterprise requirements for reliability and governance. 
\textbf{Instead, we advocate an \emph{architectural separation} in which knowledge and recurrent computation are externalized into inspectable and controllable components (e.g., databases, knowledge bases, and symbolic procedures), while using SLMs as lightweight extraction and interface layers.}
We further postulate that, for knowledge-intensive and recurrent algorithmic enterprise tasks, such hybrid architectures provide a more scalable and reliable foundation than purely parametric approaches that attempt to encode both knowledge and reasoning within large model parameters.
Our contribution is not the use of tools or retrieval per se, but the inversion of control: shifting runtime decision authority from a monolithic LLM to explicitly governed system components, while automating semi-deterministic enterprise workflows.

\textbf{Background:} In this paper, we primarily target enterprise tasks that need to run autonomously with no or minimal human in the loop. A central paradox characterizes the current phase of enterprise AI adoption: deployment is widespread, yet measurable enterprise-scale value creation remains limited. 
Across global surveys conducted between 2025 and 2026, the majority of organizations are deploying AI in at least one business function and continuing to increase investment in AI systems \citep{McKinsey2025StateOfAI, Deloitte2026StateOfAI, OpenAI2025StateEnterpriseAI}. 
However, only a minority of organizations achieve enterprise-wide deployment or measurable financial impact, with many initiatives confined to localized pilots or experimentation \citep{BizzDesign2026EnterpriseAIAdoption, PepperFoster2025AIROI}. 
The gap between adoption and value creation has become a defining theme of enterprise AI research.
Consequently, even when operational efficiency gains are observed locally, these outcomes often fail to propagate to enterprise-scale financial or operational impact \citep{McKinsey2025StateOfAI, WhartonVistage2025EnterpriseAIROI}. 
Industry surveys consistently identify the transition from pilot deployment to reliable production systems \citep{ISG2025EnterpriseAIAdoption, walkme_digital_adoption_2025}.
A closer examination reveals that barriers to adoption are largely structural rather than purely algorithmic. \textit{Technically}, challenges include integrating AI with legacy infrastructure, managing sensitive proprietary data, and addressing persistent concerns regarding reliability and hallucinations \cite{opentext_capgemini_wqr_2025, Deloitte2025AITrendsPulse}. \textit{Organizationally}, firms encounter skill shortages, fragmented ownership across business units, and divergent perceptions between management and the workforce regarding AI readiness \cite{ HBR2025OrgBarriersAI}.

\textbf{The Core Mismatch:} These structural constraints highlight a deeper misalignment between current enterprise AI practices and the nature of enterprise workloads.
Unlike open-ended consumer or research settings, enterprise tasks are typically governed by explicit workflows, domain-specific knowledge, compliance obligations, and operational accountability.
Across sectors, from finance and healthcare to manufacturing, automotive, and government, AI is increasingly deployed for tasks such as fraud detection, clinical documentation, compliance analysis, process optimization, and decision support \cite{Menlo2025StateAIHealthcare, McKinsey2025StateOfAI}. 
These domains share a defining characteristic: the underlying knowledge sources, constraints, and decision rules are explicit, structured, and continuously evolving.

In this paper, we argue that attempting to encode such knowledge and reasoning entirely within model parameters, whether through large-scale training or through distillation of LLM outputs, conflates distinct computational roles and leads to fragile enterprise deployments.
Instead, we propose that enterprise AI systems should externalize knowledge storage and repeatable algorithmic reasoning to dedicated and inspectable modules, while using language models primarily as structured extraction interfaces between unstructured inputs and the formal systems that govern enterprise workflows. 
We support this position through a set of theoretical results that connect language model capacity to information-theoretic and algorithmic limits. We further show that hybrid architectures combining SLM extraction with external knowledge and symbolic reasoning can outperform monolithic LLM or distilled SLM systems on tasks dependent on external knowledge or structured computation.
In summary, this paper takes a principled position on how language models should be used in enterprise settings. Specifically we make the following contributions while maintaining our position:
\begin{enumerate}[wide, labelindent=0pt, labelwidth=!, leftmargin=*]
    \item \textbf{A Unifying Viewpoint:} We argue that enterprise AI should not be architected around an LLM as the primary policy and reasoning engine augmented by retrieval and tools. Instead, knowledge, computation, and verification must be elevated to first-class system components, with language models serving only as bounded interface layers between unstructured inputs and governed enterprise infrastructure for structured tasks.
    \item  \textbf{Theoretical Foundation for Reliability}: We back our position with a rigorous theoretical basis, demonstrating that enterprise AI reliability improves significantly by combining language models with formal methods. Crucially, we formally prove that current industry-standard distillation techniques are fundamentally information-losing, creating a "projection bottleneck" that prevents SLMs from inheriting complex algorithmic reasoning . By externalizing storage and computation, we ensure models operate within their inherent capacity for governable enterprise process.
    \item \textbf{Architectural Separation:} We advocate for using task-specific, small language models as lightweight extraction interfaces paired with external knowledge layers, enabling scalable, interpretable, and cost-effective business processes that bridge the "pilot-to-production" gap. 
    \item \textbf{The Path Forward:} We propose a hybrid frontier integrating the expressive power of probabilistic models with the provable reliability of deterministic systems, creating a sustainable foundation for industrial intelligence and governed agentic workflows.
\end{enumerate}

\begin{figure}
  \centering
  \includegraphics[width=1\columnwidth]{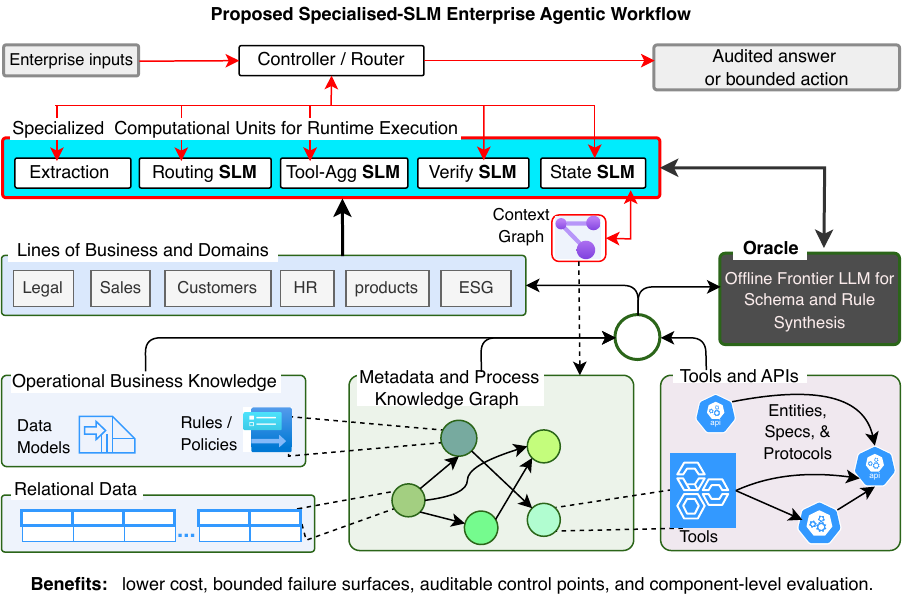}
  \caption{The Proposed Specialised-SLM Enterprise Agentic Workflow (\textcolor{red}{Red arrows indicate live runtime communication}). Unlike monolithic designs using heavily-parameterized LLMs, this architecture assigns specialised SLMs to narrow roles like extraction and routing to minimize error propagation. A controller links these interface units to externalised knowledge graphs and symbolic tools, which carry the substantive burden of retrieval and reasoning. This separation allows probabilistic models to act as interfaces while deterministic components enforce governance. High-capacity frontier LLMs remain offline and act as oracles that synthesise schemas and rules, ensuring the online control loop is cost-effective, auditable, and grounded in explicit infrastructure.}
  \label{fig:agentic_slm_overview}
\end{figure}

\section{Related Work}

A growing body of research examines the gap between enterprise adoption of AI technologies and the realization of enterprise-scale value.
Large-scale industry surveys consistently report that although organizations widely experiment with AI, most deployments remain limited to pilot projects and siloed use cases rather than integrated enterprise systems.
Despite widespread use of AI in single business functions, McKinsey’s global survey finds that nearly two-thirds of organizations have yet to scale these systems, with only a minority achieving measurable enterprise-wide impact \cite{McKinsey2025StateOfAI}. 
Similarly, a large-scale software and digital operations survey shows that despite broad experimentation with generative AI, only a small fraction of organizations have reached enterprise-wide deployment, with technical barriers such as legacy-system integration and data governance cited as major obstacles \cite{opentext_capgemini_wqr_2025}. 
Operational studies also highlight a persistent “pilot-to-production” gap, where organizations deploy AI pilots but struggle to operationalize them at scale because of organizational constraints, governance gaps, and insufficient workforce readiness \cite{walkme_digital_adoption_2025,ey_responsible_ai_2025}.

Adjacent academic literature analyzes the organizational and technical barriers underlying these patterns. 
Empirical and survey-based studies of AI adoption in enterprises identify barriers including inadequate organizational strategy, weak governance frameworks, lack of AI expertise, and challenges integrating AI systems with existing operational processes \cite{hana_ai_adoption_2025,romeo_ai_org_review_2025}. 
In parallel, research on enterprise deployments of large language models highlights difficulties with reliability, hallucination, and integration with existing knowledge infrastructures, motivating architectures that combine LLM capabilities with external retrieval and knowledge systems \cite{rag_enterprise_slr_2025,rag_structured_naacl_2024}. 
Collectively, these studies suggest that the primary obstacles to realizing enterprise value from AI are not merely model capability limitations but systemic challenges related to organizational integration, governance, and workflow compatibility.
We argue that hybrid architectures combining external knowledge systems with lightweight language-model interfaces are most appropriate in these settings.

\section{Theoretical Formalization of the Enterprise AI Design Choice}
\label{sec:basic_formalization}

We briefly formalize the intuition underlying our architectural position: language models with bounded parameters cannot reliably represent all of the knowledge and computational structure required for enterprise tasks.
Instead, enterprise systems benefit from separating language understanding from knowledge storage and algorithmic computation (details and proofs are in Section~\ref{sec:appendix}).

\paragraph{Knowledge and task structure.}
Let $X$ denote the input to the system (e.g., a user query, enterprise document, or event), and $Y$ the desired output.
In many enterprise settings, the correct output depends not only on $X$ but also on task-relevant information stored in external knowledge sources.
We denote this information by $Z$, obtained from a knowledge base $\mathcal{K}$, and model the task by
\begin{equation}
Y = h(X, Z),
\label{eq:enterprise_task}
\end{equation}
where $h(\cdot)$ represents the underlying decision or reasoning procedure.
If the required information $Z$ is not fully determined by $X$, then the conditional mutual information
\begin{equation}
I(Y; Z \mid X) > 0,
\label{eq:cmi_condition}
\end{equation}
which implies that $X$ alone does not suffice to determine the correct output.
Consequently, a system that relies only on parametric inference from $X$ cannot achieve zero error on such tasks.

\paragraph{Parametric inference versus externalized knowledge.}
Consider a language model $f_\theta$ with parameters $\theta$ trained to produce outputs directly from $X$:
\begin{equation}
\widehat{Y}_{\text{LLM}} = f_\theta(X).
\end{equation}
If $Z$ contains additional task-relevant information beyond $X$, any model that depends only on $X$ and $\theta$ faces an irreducible error floor determined by the conditional uncertainty of $Y$ given $X$.
In contrast, a modular system that retrieves task-relevant information and applies an explicit reasoning procedure computes
\begin{equation}
\widehat{Y}_{\text{Hybrid}} = R\big(\phi(X), \mathcal{K}\big),
\label{eq:hybrid_system}
\end{equation}
where $\phi$ is a lightweight extraction function (implemented by a small language model) that maps $X$ to a structured representation, and $R$ is a deterministic algorithmic procedure over the extracted structure and external knowledge $\mathcal{K}$.

\paragraph{Computational Capacity. }
Let \(I_{\mathrm{task}}\) denote the information complexity of a task and let \(I_{\mathrm{model}}\) denote the effective information capacity of a model.
A necessary feasibility condition for fitting the model on the task is
\begin{equation}
I_{\mathrm{model}} \ge I_{\mathrm{task}}.
\label{eq:capacity_condition}
\end{equation}
When \(I_{\mathrm{task}} > I_{\mathrm{model}}\), the model cannot fully represent the task, regardless of optimisation quality.
This is particularly relevant for enterprise workflows requiring explicit rules, multi-step procedures, evolving policies, or out-of-distribution adaptation.

\textbf{Formal takeaway.}
If enterprise tasks depend on external knowledge beyond the input (\(I(Y;Z\mid X)>0\)) or require computation whose information complexity exceeds model capacity (\(I_{\mathrm{task}} > I_{\mathrm{model}}\)), then purely parametric systems are structurally insufficient.
In these regimes, modular systems that externalize knowledge and computation are not merely engineering preferences, but principled architectural requirements.
In the following subsections we delve into the details of the theoretical constructions for interested readers. Skip to \S \ref{position_statement} where we state the positions we take in the paper.


\subsection{A Theoretical Separation: Symbolic Reasoning with SLM Extraction and External Knowledge} \label{sec:theoraticalseperation}

In this section, we establish a formal separation showing that a hybrid system,
consisting of (i) a small language model (SLM) used only for structured extraction,
(ii) an external knowledge source, and (iii) a symbolic reasoner, can be \emph{strictly}
more capable than an SLM distilled from an LLM \emph{without rationales}. Our analysis
is information-theoretic and computational in nature: we formalize (a) an irreducible
error floor induced by missing external information, and (b) an expressivity gap for
algorithmic tasks that are efficiently computable by symbolic procedures but not by
a given bounded-capacity SLM.

\subsubsection{Problem Setup}

Let $(X,Y)\sim\mathcal{D}$ be an input--output distribution for a task of interest.
We assume that the correct output depends on task-relevant facts that are not fully
contained in the input $X$ but are obtainable from an external knowledge source.

\paragraph{External knowledge and task-relevant state.}
Let $\mathcal{K}$ denote an external knowledge source (e.g., a database, knowledge graph,
document store, or curated corpus). Let $Z$ be a random variable denoting the minimal
task-relevant information that must be consulted from $\mathcal{K}$ to answer $X$.
We assume that the task is governed by a deterministic rule
\begin{equation}
Y \;=\; h(X,Z),
\label{eq:Y-hXZ}
\end{equation}
for some (unknown) function $h$.

\paragraph{Symbolic interface.}
We posit an ideal symbolic extraction mapping $\phi^\star$ that maps an input $X$ to a
structured representation (e.g., entities, relations, slots, query form, or logical facts)
used by a symbolic reasoner:
\begin{equation}
\phi^\star:\mathcal{X}\to \mathcal{E}.
\end{equation}
A symbolic reasoner $R$ is then a deterministic algorithm that, given a structured
representation and access to $\mathcal{K}$, produces an output:
\begin{equation}
Y \;=\; R\big(\phi^\star(X),\mathcal{K}\big).
\label{eq:Y-RphiK}
\end{equation}
We interpret~\eqref{eq:Y-RphiK} as a \emph{specification} of the task: if the correct
structure is recovered, the reasoner yields the correct answer.

\subsubsection{Systems Under Comparison}

We compare the following two systems.

\paragraph{(A) Rationale-free distilled SLM.}
Let $S_W$ be a distilled SLM with parameters $W$ trained by response-only distillation,
i.e., the student is trained to match a teacher's outputs $y_T$ (or logits) without
observing any intermediate rationales. At inference, $S_W$ produces
\begin{equation}
\widehat{Y}_S \sim p_W(\cdot\mid X),
\end{equation}
and does \emph{not} consult $\mathcal{K}$. Thus, $\widehat{Y}_S$ is a function of $X$
and $W$ only.

\paragraph{(B) Hybrid symbolic system with SLM extraction.}
Let $\widehat{\phi}$ be an SLM-based extractor that approximates $\phi^\star$. The
hybrid system computes:
\begin{equation}
\widehat{Y}_H \;=\; R\big(\widehat{\phi}(X),\mathcal{K}\big).
\end{equation}
We assume $R$ is \emph{sound} relative to the specification: whenever
$\widehat{\phi}(X)=\phi^\star(X)$ and the knowledge source provides the relevant facts,
then $\widehat{Y}_H = Y$.

\subsubsection{Strict Advantage from Access to External Information}

We begin with an information-theoretic separation: if the label depends on external
knowledge beyond the input, then any model that only observes $X$ suffers an irreducible
Bayes error, whereas a KB-grounded symbolic pipeline can reduce error to the extractor's
failure probability.

\begin{assumption}[Nontrivial external information]\label{ass:nontrivial}
We assume the task depends on knowledge beyond the input in the sense that
\begin{equation}
I(Y;Z\mid X) \;>\; 0.
\label{eq:MI-nontrivial}
\end{equation}
\end{assumption}

Assumption~\ref{ass:nontrivial} states that $Z$ contains information about $Y$ not already
contained in $X$. This holds whenever there exist inputs $x$ for which different KB states
(or retrieved facts) yield different correct outputs.

\begin{lemma}[Irreducible Bayes risk without external knowledge]\label{lem:bayes-floor}
Under Assumption~\ref{ass:nontrivial}, any predictor that only observes $X$ incurs a
nonzero Bayes risk:
\begin{equation}
\mathcal{R}^\star_{X}
\;:=\;
\inf_{\widehat{y}:\mathcal{X}\to\mathcal{Y}}
\Pr\!\big[\widehat{y}(X)\neq Y\big]
\;>\; 0.
\label{eq:bayes-floor}
\end{equation}
\end{lemma}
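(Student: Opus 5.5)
The plan is to show that, under Assumption~\ref{ass:nontrivial}, $Y$ cannot be almost surely a function of $X$, and then to argue that the infimum in \eqref{eq:bayes-floor} is attained by the MAP predictor, so it cannot be zero. I will assume $\mathcal{Y}$ is finite or countable, which is the natural setting for enterprise outputs, and that the conditional law $p(y\mid x)$ is well defined.

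\textbf{Step 1: reduce to $H(Y\mid X)>0$.} By the chain rule,
\begin{equation*}
I(Y;Z\mid X) = H(Y\mid X) - H(Y\mid X,Z).
\end{equation*}
Since the task is governed by the deterministic rule \eqref{eq:Y-hXZ}, $Y=h(X,Z)$, we have $H(Y\mid X,Z)=0$. Hence $I(Y;Z\mid X)=H(Y\mid X)>0$. Even without the deterministic rule, $I(Y;Z\mid X)\le H(Y\mid X)$ gives the same conclusion. So on a set of $x$ with positive probability, the conditional law $p(\cdot\mid x)$ is not a point mass.

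\textbf{Step 2: identify the infimum with the MAP error.} For any measurable $\widehat{y}$,
\begin{equation*}
\Pr[\widehat{y}(X)\neq Y] = \mathbb{E}\big[1-p(\widehat{y}(X)\mid X)\big] \ \ge\ \mathbb{E}\big[1-\max_{y} p(y\mid X)\big].
\end{equation*}
Because $\mathcal{Y}$ is countable and the probabilities sum to one, the maximum over $y$ is attained. The MAP rule $\widehat{y}^\star(x)\in\arg\max_y p(y\mid x)$ therefore achieves this bound, so
\begin{equation*}
\mathcal{R}^\star_X=\mathbb{E}\big[1-\max_y p(y\mid X)\big].
\end{equation*}
This step needs a measurable selection of the argmax. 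I would handle it by breaking ties according to a fixed enumeration of $\mathcal{Y}$.

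\textbf{Step 3: positivity.} Suppose $\mathcal{R}^\star_X=0$. Then $\max_y p(y\mid X)=1$ almost surely, so $Y=\widehat{y}^\star(X)$ almost surely. This gives $H(Y\mid X)=0$, contradicting Step 1. Hence $\mathcal{R}^\star_X>0$.

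\textbf{Main obstacle.} The delicate point is Step 2: showing the infimum is attained rather than merely approached by a sequence of predictors each with positive error. Without attainment, positive error for every predictor would not rule out an infimum of zero. Countability of $\mathcal{Y}$ resolves this through the MAP rule. If the paper intends a continuous $\mathcal{Y}$ with 0-1 loss, the argument must be adjusted: either reduce to the discrete case, or observe that the error of any $\widehat{y}$ is at least $1-\sup_y p(y\mid X)$ and that this quantity is positive on the set where $p(\cdot\mid X)$ is non-degenerate.
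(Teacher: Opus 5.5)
Your proof is correct and takes essentially the same route as the paper. You use $I(Y;Z\mid X)>0$ to get non-degeneracy of $p(\cdot\mid x)$ on a positive-mass set, then average the pointwise bound $1-\max_y p(y\mid x)$ over $X$; your entropy identity and MAP/measurability details just make the paper's sketch rigorous, though attainment of the infimum is not actually needed, since $\mathbb{E}[1-\max_y p(y\mid X)]$ does not depend on $\widehat{y}$ and is already strictly positive.
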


\begin{proof}
Since $I(Y;Z\mid X)>0$, $Y$ is not a deterministic function of $X$ alone. Equivalently,
there exists a set of inputs with nonzero probability mass for which $p(Y\mid X=x)$ is
non-degenerate. For such $x$, any deterministic predictor $\widehat{y}(x)$ makes an error
with probability at least $1-\max_{y}p(Y=y\mid X=x)$, which is strictly positive. Taking
expectation over $X$ yields $\mathcal{R}^\star_{X}>0$.
\end{proof}

\begin{lemma}[Hybrid error is bounded by extraction failure]\label{lem:hybrid-bound}
Assume the reasoner $R$ is sound given correct extraction and that the knowledge source
returns the task-relevant facts. Define the extraction success event
\[
E := \{\widehat{\phi}(X)=\phi^\star(X)\}.
\]
Then the hybrid system's risk is bounded as
\begin{equation}
\Pr[\widehat{Y}_H\neq Y] \;\le\; \Pr[\neg E].
\label{eq:hybrid-risk-bound}
\end{equation}
\end{lemma}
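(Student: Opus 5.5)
The plan is to prove the bound by event inclusion: I will show that the error event $\{\widehat{Y}_H \neq Y\}$ is contained in the extraction failure event $\neg E$. The inequality \eqref{eq:hybrid-risk-bound} then follows from monotonicity of probability. Equivalently, I will show that on $E$ the hybrid output is always correct, i.e., $E \subseteq \{\widehat{Y}_H = Y\}$, and take complements.

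\textbf{Step 1.} Fix any outcome in $E$, so that $\widehat{\phi}(X)=\phi^\star(X)$. Because $R$ is a deterministic algorithm and the same knowledge source $\mathcal{K}$ is consulted, substitution gives
\[
\widehat{Y}_H = R\big(\widehat{\phi}(X),\mathcal{K}\big) = R\big(\phi^\star(X),\mathcal{K}\big).
\]

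\textbf{Step 2.} By the specification \eqref{eq:Y-RphiK}, this quantity equals $Y$. This is exactly the soundness assumption on $R$ applied under the hypothesis that $\mathcal{K}$ returns the task-relevant facts $Z$. Hence $\widehat{Y}_H = Y$ on $E$.

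\textbf{Step 3.} Decompose the error probability by the law of total probability:
\[
\Pr[\widehat{Y}_H\neq Y] = \Pr[\widehat{Y}_H\neq Y,\,E] + \Pr[\widehat{Y}_H\neq Y,\,\neg E].
\]
The first term vanishes by Step 2. The second term is at most $\Pr[\neg E]$. Together these give the claimed bound.

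\textbf{Main obstacle.} The only delicate point is making the assumptions precise, not the calculation itself. Two issues need care.

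First, "the knowledge source returns the task-relevant facts" must hold almost surely, or at least on $E$. If retrieval can also fail, there is an additional failure event $F$, and the bound should instead read $\Pr[\neg E]+\Pr[F]$. I would state the lemma under the almost-sure retrieval assumption as given and remark on this generalization.

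Second, the equality \eqref{eq:Y-RphiK} must be interpreted pointwise, i.e., almost surely over $\mathcal{D}$, rather than merely in distribution. This is what licenses the substitution in Step 2. I would note that measurability of $E$ follows from $\widehat{\phi}$ and $\phi^\star$ being measurable maps into a discrete structured space $\mathcal{E}$.
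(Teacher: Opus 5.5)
Your proposal is correct and matches the paper's proof: on $E$ you substitute $\widehat{\phi}(X)=\phi^\star(X)$ and use soundness of $R$ together with availability of the KB facts to get $\widehat{Y}_H=Y$, which gives $\{\widehat{Y}_H\neq Y\}\subseteq\neg E$; your total-probability split in Step 3 is just an equivalent way of applying monotonicity. Your remarks on a separate retrieval-failure event and on reading the specification as holding almost surely make explicit assumptions that the paper leaves implicit.
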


\begin{proof}
On event $E$, we have $\widehat{\phi}(X)=\phi^\star(X)$, and by soundness of $R$ and
availability of the relevant KB facts, we obtain
\[
\widehat{Y}_H = R(\widehat{\phi}(X),\mathcal{K})
= R(\phi^\star(X),\mathcal{K}) = Y.
\]
Therefore, $\{\widehat{Y}_H\neq Y\}\subseteq \neg E$, which implies~\eqref{eq:hybrid-risk-bound}.
\end{proof}

We now combine these lemmas into a strict dominance result.

\begin{theorem}[Strict advantage via external knowledge]\label{thm:strict-adv-mi}
Under Assumption~\ref{ass:nontrivial}, any rationale-free distilled SLM $S_W$ that does not
consult $\mathcal{K}$ is lower-bounded by $\mathcal{R}^\star_{X}>0$. Moreover, if the
hybrid system satisfies $\Pr[\neg E] < \mathcal{R}^\star_{X}$, then it is strictly better:
\begin{equation}
\Pr[\widehat{Y}_H\neq Y] \;<\; \inf_{W}\Pr[\widehat{Y}_S\neq Y].
\label{eq:strict-adv-mi}
\end{equation}
\end{theorem}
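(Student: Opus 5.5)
The plan is to chain Lemma~\ref{lem:bayes-floor} and Lemma~\ref{lem:hybrid-bound}; the only real work is checking that the distilled SLM $S_W$ falls in the class of predictors that Lemma~\ref{lem:bayes-floor} covers.

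\textbf{Lower bound for the distilled SLM.} First I will show that for every fixed $W$, $\Pr[\widehat{Y}_S\neq Y]\ge \mathcal{R}^\star_X$. Lemma~\ref{lem:bayes-floor} is stated for deterministic predictors $\widehat{y}:\mathcal{X}\to\mathcal{Y}$, whereas $\widehat{Y}_S\sim p_W(\cdot\mid X)$ may be randomized. I handle this by conditioning on $X=x$. Since $S_W$ does not consult $\mathcal{K}$, its internal randomness is independent of $Z$ and hence of $Y$ given $X$. Therefore
\[
\Pr[\widehat{Y}_S\neq Y\mid X=x]=1-\sum_{y}p_W(y\mid x)\,p(Y=y\mid X=x)\ge 1-\max_y p(Y=y\mid X=x).
\]
Taking expectation over $X$ gives $\Pr[\widehat{Y}_S\neq Y]\ge \mathbb{E}_X[1-\max_y p(Y=y\mid X)]$. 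This quantity equals $\mathcal{R}^\star_X$, because the infimum over deterministic maps is attained pointwise by the MAP rule.

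Equivalently, a randomized predictor is a mixture of deterministic ones, so its risk is an average of quantities each at least $\mathcal{R}^\star_X$. The bound does not depend on $W$, so $\inf_W\Pr[\widehat{Y}_S\neq Y]\ge\mathcal{R}^\star_X$. Lemma~\ref{lem:bayes-floor} then gives $\mathcal{R}^\star_X>0$, which proves the first claim.

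\textbf{Strict advantage.} Lemma~\ref{lem:hybrid-bound} gives $\Pr[\widehat{Y}_H\neq Y]\le\Pr[\neg E]$. Combining this with the hypothesis $\Pr[\neg E]<\mathcal{R}^\star_X$ and the lower bound above yields the chain
\[
\Pr[\widehat{Y}_H\neq Y]\le \Pr[\neg E]<\mathcal{R}^\star_X\le \inf_W\Pr[\widehat{Y}_S\neq Y],
\]
which is \eqref{eq:strict-adv-mi}.

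\textbf{Main obstacle.} The delicate step is the reduction from randomized, rationale-free SLMs to the deterministic Bayes floor. It requires the conditional independence $\widehat{Y}_S\perp Y\mid X$, which I obtain from the modelling assumption that $\widehat{Y}_S$ depends only on $X$, $W$ and independent sampling noise.

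The training procedure does not matter here: the teacher may have seen $\mathcal{K}$, but $W$ is a fixed quantity at test time and does not vary with the realized $Z$. To make this explicit, I would treat $Z$ as random at test time, for example as varying KB states, independent of the training data that determined $W$. The infimum over $W$ then causes no difficulty, because the bound holds uniformly in $W$.
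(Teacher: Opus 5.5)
Your proposal is correct and follows the paper's proof. Like the paper, you chain the Bayes floor of Lemma~\ref{lem:bayes-floor} for the KB-free student with the extraction-failure bound of Lemma~\ref{lem:hybrid-bound}, giving $\Pr[\widehat{Y}_H\neq Y]\le\Pr[\neg E]<\mathcal{R}^\star_X\le\inf_W\Pr[\widehat{Y}_S\neq Y]$; your only addition is the explicit reduction of the sampled output $\widehat{Y}_S\sim p_W(\cdot\mid X)$ to the deterministic floor via $\widehat{Y}_S\perp Y\mid X$, a step the paper skips by treating $\widehat{Y}_S$ as a function of $X$ and the fixed weights $W$.
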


\begin{proof}
Any rationale-free distilled SLM without KB access outputs $\widehat{Y}_S$ as a function of
$X$ (and fixed weights $W$). Thus, its risk is lower-bounded by the Bayes risk given $X$,
namely $\mathcal{R}^\star_X$ from Lemma~\ref{lem:bayes-floor}. Meanwhile, Lemma~\ref{lem:hybrid-bound}
gives $\Pr[\widehat{Y}_H\neq Y]\le \Pr[\neg E]$. If $\Pr[\neg E]<\mathcal{R}^\star_X$, we obtain
the strict inequality~\eqref{eq:strict-adv-mi}.
\end{proof}

\paragraph{Interpretation.}
Theorem~\ref{thm:strict-adv-mi} formalizes a common enterprise regime: when correct answers
depend on up-to-date or proprietary knowledge (captured by $Z$), any model lacking access to
that knowledge faces an irreducible error floor. A KB-grounded symbolic pipeline can surpass
this floor if extraction is sufficiently reliable.

\subsubsection{Strict Advantage from Computational (Expressivity) Separation}

We now present a second separation based on computation. Even if external knowledge were
available, certain tasks require algorithmic procedures (e.g., exact counting, structured
traversals, constrained satisfaction) that may not be representable or systematically
generalizable by a bounded-capacity SLM.

\paragraph{Function classes.}
Let $\mathcal{C}_{\mathrm{SLM}}(B)$ denote the class of functions representable by SLMs up to
capacity $B$ (e.g., bounded parameters/precision/architecture). Let $\mathcal{C}_{R}$ denote
the class of functions computable by the symbolic reasoner $R$ with access to $\mathcal{K}$.

\begin{assumption}[Expressivity separation]\label{ass:sep}
There exists a task function $f^\star$ such that
\begin{equation}
f^\star \in \mathcal{C}_{R}
\qquad\text{but}\qquad
f^\star \notin \mathcal{C}_{\mathrm{SLM}}(B).
\label{eq:sep}
\end{equation}
\end{assumption}

Assumption~\ref{ass:sep} captures regimes where the symbolic procedure can implement the
desired computation exactly, while a bounded SLM cannot represent it (or cannot generalize
it beyond the training regime).

\begin{theorem}[Strict advantage via computational separation]\label{thm:strict-adv-comp}
Under Assumption~\ref{ass:sep}, there exists a distribution $\mathcal{D}$ and a constant
$\epsilon>0$ such that any rationale-free distilled SLM $S_W$ of capacity $B$ incurs error
at least $\epsilon$:
\begin{equation}
\inf_{W:\, S_W \in \mathcal{C}_{\mathrm{SLM}}(B)} \Pr[\widehat{Y}_S \neq f^\star(X,\mathcal{K})]
\;\ge\; \epsilon.
\label{eq:slm-lb}
\end{equation}
If, additionally, the hybrid system computes $\widehat{Y}_H = f^\star(X,\mathcal{K})$ whenever
extraction succeeds, then
\begin{equation}
\Pr[\widehat{Y}_H \neq f^\star(X,\mathcal{K})] \;\le\; \Pr[\neg E].
\end{equation}
Consequently, if $\Pr[\neg E] < \epsilon$, the hybrid system is strictly better than any
rationale-free distilled SLM of capacity $B$.
\end{theorem}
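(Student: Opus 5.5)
The first claim follows from Assumption~\ref{ass:sep} by choosing a suitable distribution $\mathcal{D}$. Since $f^\star\notin\mathcal{C}_{\mathrm{SLM}}(B)$, every $g\in\mathcal{C}_{\mathrm{SLM}}(B)$ differs from $f^\star$ on at least one input. This gives a pointwise, per-function error on some input. What we need is a uniform error floor over the whole class. I would get it in two steps, handling first the case where the input domain relevant to the task is finite and then reducing to it.

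\textbf{Step 1 (finite case, the main step).} Restrict attention to a finite set $\mathcal{X}_0$ of inputs, with $\mathcal{K}$ fixed. Because the capacity bound (bounded parameters and precision) makes the restricted class finite or compact, $\mathcal{C}_{\mathrm{SLM}}(B)|_{\mathcal{X}_0}$ is a finite set of functions $\mathcal{X}_0\to\mathcal{Y}$. Assume we can choose $\mathcal{X}_0$ large enough that every $g$ in the class disagrees with $f^\star$ somewhere on $\mathcal{X}_0$; this is the separation assumption read on a witness set. Let $\mathcal{D}$ be uniform on $\mathcal{X}_0$. Then every $g$ errs with probability at least $\epsilon:=1/|\mathcal{X}_0|$, and taking the infimum over $W$ gives~\eqref{eq:slm-lb}.

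\textbf{Step 2 (witness set).} I expect the main obstacle to be justifying the existence of such a finite witness set when the class is infinite, for instance with real-valued weights. My first attempt would use the finite-precision reading of ``capacity $B$'': with $b$-bit weights and $P$ parameters, the class contains at most $2^{bP}$ functions. Collecting one disagreement point per function then gives $|\mathcal{X}_0|\le 2^{bP}$, and hence $\epsilon\ge 2^{-bP}$. If continuous weights must be allowed, I would instead take a compactness route. Suppose the class is compact in the topology of pointwise convergence on a countable domain and the outputs are discrete. Then the sets $\{g: g=f^\star \text{ on } \mathcal{X}_n\}$ are closed and nested, and their intersection is empty. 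By compactness some finite $\mathcal{X}_n$ already empties the intersection, which gives the witness set. Alternatively, I would state explicitly that ``capacity $B$'' is meant in this discretized sense, consistent with the bounded-precision parenthetical in the definition. A further remark: randomized outputs $\widehat{Y}_S\sim p_W$ do not change the argument. The error of a randomized predictor is an average over deterministic predictors, and each of those has error at least $\epsilon$ if the randomization is absorbed into the class, or else one argues directly via the uniform mixture.

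\textbf{Step 3 (hybrid bound and conclusion).} The hybrid bound repeats the argument of Lemma~\ref{lem:hybrid-bound} with $f^\star$ in place of $Y$. On the event $E$ we have $\widehat{Y}_H=f^\star(X,\mathcal{K})$ by the added hypothesis, so $\{\widehat{Y}_H\neq f^\star\}\subseteq\neg E$. Finally, when $\Pr[\neg E]<\epsilon$ we can chain the inequalities:
\[
\Pr[\widehat{Y}_H\neq f^\star]\le\Pr[\neg E]<\epsilon\le\inf_W\Pr[\widehat{Y}_S\neq f^\star].
\]
This is the same structure as the proof of Theorem~\ref{thm:strict-adv-mi}, with all probabilities taken under the $\mathcal{D}$ constructed in Step 1.
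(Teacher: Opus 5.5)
Your proof is correct. Its skeleton matches the paper's sketch: a uniform error floor for the SLM class, the hybrid bound obtained by rerunning Lemma~\ref{lem:hybrid-bound} with $f^\star$ in place of $Y$, then chaining the inequalities. The real difference is in how you obtain the floor.

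The paper only asserts that $f^\star\notin\mathcal{C}_{\mathrm{SLM}}(B)$ is ``equivalently'' the existence of a distribution on which every member of the class errs with probability at least $\epsilon$, gesturing at hard structures or lengths. You actually construct such a distribution: $\mathcal{D}$ uniform on a finite witness set $\mathcal{X}_0$ that hits every function in the class, giving $\epsilon=1/|\mathcal{X}_0|\ge 2^{-bP}$.

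Your extra hypothesis (finite precision, or compactness in the pointwise topology with discrete outputs) is exactly what the paper's ``equivalently'' hides. It is genuinely needed because the statement uses an infimum. Take the class $\{x\mapsto\mathbf{1}[x\ge w]: w\in[0,1]\}$ on $\mathbb{Q}\cap[0,1]$. It has bounded weights and does not contain $f^\star=\mathbf{1}[x>1/2]$. Yet under any distribution the error of $\mathbf{1}[x\ge w]$ is $\Pr[1/2<X<w]$, which tends to $0$ as $w\downarrow 1/2$. So your route buys an actual proof with an explicit $\epsilon$, at the cost of making the bounded-precision reading of ``capacity $B$'' an explicit assumption; the paper's route buys only brevity.

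Two minor caveats:
\begin{itemize}
\item With your $\mathcal{D}$, the hypothesis $\Pr[\neg E]<1/|\mathcal{X}_0|$ forces a deterministic extractor to be exact on all of $\mathcal{X}_0$. The dominance conclusion is therefore valid but quantitatively weak; the paper never pins down $\epsilon$ either.
\item For stochastic $p_W$, rely on your ``direct'' argument rather than the mixture decomposition. Each of the finitely many kernels has $p_W(f^\star(x)\mid x)<1$ at its witness point, so the minimum error is positive. The deterministic components of a mixture, by contrast, need not lie in $\mathcal{C}_{\mathrm{SLM}}(B)$.
\end{itemize}
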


\begin{proof}[Proof sketch]
The separation $f^\star\notin \mathcal{C}_{\mathrm{SLM}}(B)$ implies that no SLM in this
class can realize $f^\star$ on all inputs; equivalently, there exists a distribution (often
over challenging structures or lengths) on which every function in $\mathcal{C}_{\mathrm{SLM}}(B)$
errs with probability at least $\epsilon>0$. This yields~\eqref{eq:slm-lb}. The hybrid system
delegates the target computation to $R$, which computes $f^\star$ exactly given correct extraction
and KB access; thus its error is bounded by extraction failure $\Pr[\neg E]$ as in
Lemma~\ref{lem:hybrid-bound}. If $\Pr[\neg E] < \epsilon$, strict dominance follows.
\end{proof}

\paragraph{Interpretation.}
Theorem~\ref{thm:strict-adv-comp} formalizes that the hybrid pipeline can outperform a rationale-free
distilled SLM not only by providing missing knowledge, but also by providing \emph{algorithmic structure}
through an explicit symbolic reasoner. In particular, the student SLM is not required to internalize
the full reasoning procedure; it only needs to provide a structured interface (entities, relations, or
query forms) to the symbolic module.

\subsubsection{Discussion: Why Rationale-Free Distillation Is Especially Limited}

We emphasize that response-only distillation trains the student to mimic teacher outputs without
transferring intermediate reasoning structure. When tasks require decomposable subroutines or explicit
verification, the student is incentivized to approximate the teacher's mapping by shortcuts that may
not be stable under distribution shift or compositional perturbations. In contrast, our hybrid pipeline
externalizes reasoning into $R$ and externalizes knowledge into $\mathcal{K}$, thereby reducing the
burden on the SLM to encode both facts and algorithms in its weights.

\subsection{Why monolithic LLMs (even with with tools and RAGs) are wasteful vis a vis specialized SLMs: A Theory of Modular Decomposition, Capacity Allocation, and Efficiency}
\label{sec:modular_theory}

We formalize the intuition that enterprise AI systems benefit from separating language understanding, knowledge access, and computation, and show that such decomposition yields both information-theoretic and efficiency gains over monolithic LLM-based systems.

\subsection{Entropic decomposition of enterprise tasks}

Let \(X\) denote the input, \(Y\) the desired output, and \(Z\) the task-relevant information obtained from external knowledge sources. As established earlier, many enterprise tasks satisfy:
\begin{equation}
Y = h(X,Z), \quad \text{with} \quad I(Y;Z \mid X) > 0,
\label{eq:task_external}
\end{equation}
indicating that the input alone does not determine the correct output.

We introduce an entropic decomposition of the conditional uncertainty:
\begin{equation}
H(Y \mid X) = H_{\textsc{know}} + H_{\textsc{comp}} + H_{\textsc{lang}} + \Delta,
\label{eq:entropy_decomposition}
\end{equation}
where:
\begin{itemize}
    \item \(H_{\textsc{know}}\) captures uncertainty due to missing or evolving knowledge,
    \item \(H_{\textsc{comp}}\) captures uncertainty due to algorithmic or procedural computation,
    \item \(H_{\textsc{lang}}\) captures uncertainty in mapping unstructured language to structured representations,
    \item \(\Delta\) captures residual interactions between these components.
\end{itemize}

In monolithic LLM systems, all components must be resolved implicitly within a single parametric model. In contrast, retrieval and tools construct evidence \(E\), reducing:
\begin{equation}
H(Y \mid X,E) \ll H(Y \mid X),
\end{equation}
primarily eliminating \(H_{\textsc{know}}\) and a significant portion of \(H_{\textsc{comp}}\).

\paragraph{Interface-dominated regime.}
In the limit, we obtain:
\begin{equation}
H(Y \mid X,E) \approx H_{\textsc{lang}},
\end{equation}
implying that the remaining task is principally an interface problem.

\subsection{Separation of concerns and complexity reduction}

A monolithic system learns:
\begin{equation}
\hat{Y}_{\textsc{llm}} = f_\theta(X),
\end{equation}
requiring the model to encode knowledge, reasoning, and control policies jointly.

Our modular formulation instead decomposes the computation:
\begin{equation}
\hat{Y}_{\textsc{hyb}} = R(\phi(X), K),
\end{equation}
where:
\begin{itemize}
    \item \(\phi\) is a structured extraction function (implemented by SLMs),
    \item \(R\) is a deterministic procedure over structured inputs and knowledge \(K\).
\end{itemize}

This induces a reduction in effective task complexity. Let \(I_{\textsc{task}}\) denote the information complexity of the full task and \(I_{\textsc{model}}\) the capacity of the model. Feasibility requires:
\begin{equation}
I_{\textsc{model}} \ge I_{\textsc{task}}.
\end{equation}

Under modularization, the model solves only the interface subproblem:
\begin{equation}
I_{\textsc{task}}^{(\textsc{interface})} \ll I_{\textsc{task}}^{(\textsc{end-to-end})},
\end{equation}
since knowledge and computation are delegated externally.

\subsection{Excess capacity and inefficiency of monolithic LLMs}

We now quantify the inefficiency introduced by over-provisioned models.

Let \(c(M)\) denote compute cost per token for model \(M\), and \(e(M)\) its energy cost. For a request requiring \(m\) model invocations and \(T\) tokens, total cost is:
\begin{equation}
C_{\textsc{run}}(M) = m T c(M), \quad 
E_{\textsc{run}}(M) = m T e(M).
\end{equation}

Let \(L\) be a large LLM and \(S\) a specialized SLM with \(c(L) \gg c(S)\).

\paragraph{Proposition (Excess capacity inefficiency).}
In the interface-dominated regime where \(H(Y \mid X,E)\) is small and solvable by \(S\), the marginal benefit of using \(L\) instead of \(S\) is limited to improvements in \(H_{\textsc{lang}}\), while the cost increases as:
\begin{equation}
\Delta C \ge mT(c(L) - c(S)), \quad 
\Delta E \ge mT(e(L) - e(S)).
\end{equation}

\paragraph{Interpretation.}
Once knowledge and computation are externalized, the additional representational capacity of \(L\) is not utilized to reduce dominant sources of uncertainty, leading to systematic over-allocation of computational resources.

\subsubsection{Efficiency gain from modular architectures}

For a modular system, total cost decomposes as:
\begin{equation}
C_{\textsc{hyb}} \approx \sum_{j} T_j c(S_j) + C_{\textsc{retr}} + C_{\textsc{tools}} + C_{\textsc{verify}},
\end{equation}
which replaces repeated high-cost LLM invocations with lower-cost SLM operations and deterministic procedures.

\paragraph{Key insight.}
Modular architectures reallocate computation from probabilistic inference to deterministic, structured systems, achieving both:
\begin{itemize}
    \item lower expected compute and energy cost, and
    \item improved reliability through bounded and interpretable components.
\end{itemize}

\subsection{When to prioritize task specific SLMs over LLMs: A Decision-Theoretic Framework for System Selection}
\label{sec:selection_theory}

We provide a principled framework to guide practitioners in choosing between monolithic LLM-based systems and modular SLM--tool architectures.

\subsubsection{Total cost formulation}

Let \(N\) denote the number of requests over a deployment horizon. We define the total expected cost of a system as:
\begin{equation}
\mathcal{J} = C_{\textsc{build}} + N C_{\textsc{run}} + N \lambda \mathcal{R} + C_{\textsc{ops}},
\end{equation}
where:
\begin{itemize}
    \item \(C_{\textsc{build}}\) is initial development cost,
    \item \(C_{\textsc{run}}\) is per-request execution cost,
    \item \(\mathcal{R}\) is expected failure risk,
    \item \(\lambda\) is the cost of failure,
    \item \(C_{\textsc{ops}}\) is operational overhead.
\end{itemize}

\subsubsection{Break-even analysis}

Let \(\Delta C_{\textsc{build}}\), \(\Delta C_{\textsc{run}}\), and \(\Delta \mathcal{R}\) denote differences between modular and monolithic systems.

The modular system is preferable when:
\begin{equation}
\Delta C_{\textsc{build}} < N\left(\Delta C_{\textsc{run}} + \lambda \Delta \mathcal{R}\right).
\end{equation}

The corresponding break-even point is:
\begin{equation}
N^* = \frac{\Delta C_{\textsc{build}}}{\Delta C_{\textsc{run}} + \lambda \Delta \mathcal{R}}.
\end{equation}

\subsection{Selection regimes}

We identify two regimes.

\paragraph{Regime I: Modular dominance.}
Our framework is preferred when:
\begin{itemize}
    \item External knowledge dependence is high:
    \[
    I(Y;Z \mid X) > 0,
    \]
    \item Tasks involve explicit computation or constraints,
    \item Workloads are high-volume (\(N \gg N^*\)),
    \item Error costs \(\lambda\) are significant,
    \item Knowledge or policies evolve frequently.
\end{itemize}

\paragraph{Regime II: Monolithic LLM sufficiency.}
Pure LLM systems are appropriate when:
\begin{itemize}
    \item Tasks are low-volume or exploratory (\(N \ll N^*\)),
    \item Outputs are primarily open-ended and linguistic,
    \item External knowledge or tools are unavailable,
    \item Development speed dominates long-term efficiency.
\end{itemize}

In the next section, we state our positions in this formal light.

\section{Position Statements}\label{position_statement}

Our position in this paper is that current approaches to enterprise AI place too much emphasis on treating large language models (LLMs) as monolithic reasoning systems. 
Instead, we argue that the architecture of enterprise AI should reflect the structural properties of enterprise workflows, including explicit knowledge sources, governed processes, and verifiable computational procedures. 
In particular, we articulate the following positions.

\paragraph{Position 1: Limits of parametric knowledge.}
We argue that parameterized language models should not be treated as complete repositories of enterprise knowledge.
Enterprise environments operate over large, evolving, and often proprietary information sources that must remain explicit, inspectable, and continuously updateable.
Consequently, enterprise knowledge should be externalized in structured systems such as knowledge bases, curated data stores, or indexed repositories, with language models acting as interfaces that map unstructured inputs to these underlying representations.

Equation~\eqref{eq:enterprise_task} and condition~\eqref{eq:cmi_condition} make this intuition precise.
When the required output depends on task-relevant information \(Z\) that is not fully determined by \(X\), a model of the form \(\hat{Y}_{\mathrm{LM}} = f_{\theta}(X)\) cannot, in general, recover the correct answer from input alone.
The appendix strengthens this argument through finite-capacity results: under finite-precision storage, the knowledge encoded in the parameters of a language model ($\mathcal{K}_{\mathrm{LLM}}$) is necessarily bounded, whereas the space of enterprise-relevant knowledge ($\mathcal{K}_{\mathrm{world}}$) is open-ended, dynamic, and organisation-specific. We formally show that $\mathcal{K}_{\mathrm{LLM}} \subsetneq \mathcal{K}_{\mathrm{world}}.$
We therefore view external knowledge stores not as optional augmentations, but as the natural locus of enterprise memory, rather than LLMs taking the center-stage.

\paragraph{Position 2: Limits of parametric reasoning/computation.}
We further contend that language models alone are insufficient to represent or execute the full range of algorithmic and rule-based computations required in enterprise workflows.
Many enterprise tasks involve explicit procedures, constraints, and decision rules that must be executed reliably and in a verifiable manner.
For such tasks, language models should be augmented with external computational mechanisms. 

The capacity condition in Equation~\eqref{eq:capacity_condition} captures this intuition at a high level.
Tasks involving multi-step computation, symbolic consistency, or policy-constrained decision-making can have information complexity that exceeds what a bounded parametric model can encode robustly.
In these cases, the model may still produce plausible outputs, but plausibility is not equivalent to faithful computation.
This is particularly problematic in enterprise settings, where correctness often depends on exact execution of explicit rules rather than approximate linguistic competence. We formally show (cf. \S \ref{compute_capacity_theory}) that under practical constraints of data drift, the language model will not generalize to the enterprise data and task. Formally we argue that in enterprise settings, $I(W; f^\ast_{\mathrm{in}}) \le I_{\mathrm{task}}$, where $W$ represents the model parameters,  $f^\ast_{\mathrm{in}}$ is the optimal function for the input task and $I_{\mathrm{task}}$ is the information capacity of the task.
Our position is therefore that, where possible, language models should be used primarily to extract structured representations or to mediate interaction, while the substantive computation itself should be delegated to systems designed for precise computation and verification.

\paragraph{Position 3: The enterprise deployment gap.}
We argue that enterprise AI systems based primarily on LLM reasoning frequently struggle to move from prototype demonstrations to reliable production deployments.
This persistent gap arises from practical constraints inherent to enterprise environments, including reliability requirements, cost and latency budgets, governance obligations, and security considerations.
For example, coding agents are deployed with human in the loop in IDEs; however automated production systems in principle should not deploy such coder-executor agents with full permissions. A recent case \cite{guardian2026_claude_database}, where a coding agent deleted entire database and backups of a firm, highlights the involved risks of providing complete autonomy to such agents.
As a result, even when models exhibit strong benchmark performance, they often fail to deliver durable enterprise-scale impact.

The formal setup above helps explain why this gap is structural.
If the target output depends on external knowledge or governed procedures, then a purely parametric system must either implicitly memorize these dependencies or approximate them heuristically.
Both strategies are brittle under knowledge drift, policy revisions, and operational exceptions.
By contrast, a hybrid system of the form in Equation~\eqref{eq:hybrid_system} localizes failure modes: extraction errors reside in \(\phi\), while knowledge and decision logic remain explicit and updateable in \(K\) and \(R\).
This separation improves diagnosability and makes monitoring, auditing, and correction operationally tractable.

\paragraph{Position 4: Modular hybrid architectures as the path forward.}
Taken together, these observations motivate an architectural shift toward modular enterprise AI systems.
In this paradigm, language models serve as lightweight extraction and interaction layers, while knowledge, computation, and verification are externalized to dedicated components such as knowledge bases, computation engines, and formal methods.
This modularization enables stronger reliability guarantees, clearer failure attribution, and enforceable governance controls, while also supporting targeted evaluation and monitoring of individual components.

Our argument is not merely that hybrid systems are easier to engineer.
Rather, the combination of Equations~\eqref{eq:enterprise_task}--\eqref{eq:capacity_condition} suggests that hybridization is the natural architectural consequence of two distinct facts: first, enterprise tasks often depend on domain information and business logic that pretrained language models may not have encountered; second, bounded parametric models cannot be expected to internalize all required knowledge and computation without loss.
In this view, the language model is best treated as an interface to formal enterprise infrastructure, not as a self-contained substitute for that infrastructure.
The appendix further develops this argument by establishing explicit separations showing that systems combining SLM-based extraction with external knowledge and symbolic reasoning can outperform distilled SLMs of fixed capacity on knowledge and algorithmic tasks.

\paragraph{Position 5: Specialized SLMs should also structure agentic enterprise systems.}
We argue that specialized SLMs are uniquely suited for agentic settings. Current enterprise workflows often deploy a single frontier LLM for a long sequence of subroutines. While these models manage general tasks—such as intent parsing, retrieval, and planning—they often lack deep domain insights and operational business knowledge ~\cite{klein2025foundationmodelstabulardata}. Furthermore, monolithic systems use one large model as a universal policy engine, creating a structural mismatch. Each subproblem has distinct schemas and latency budgets, yet all share a single parameterization, leading to costly inference and brittle error propagation.

We view enterprise agent architecture as a calculated trade-off. Rather than defaulting to massive LLMs, we leverage specialized SLMs tuned for narrow roles to optimize cost and performance. A small router coordinates these modules, while external tools handle retrieval and execution. This modular architecture replaces the single-model premise, optimizing each SLM for a bounded action space. For enterprises, this distinction is critical: specialization yields lower costs, clearer accountability, and narrower failure surfaces, aligning better with production needs than a single LLM orchestrating the full loop.

\paragraph{Implication for enterprise AI.}
This decomposition highlights a key design principle. 
When the correct output depends on external knowledge or structured computation, delegating these components to explicit modules avoids forcing a language model to encode both knowledge and computation within its parameters.
Empirically and theoretically, separating retrieval-augmented knowledge access from generation has been shown to reduce hallucinations and improve reliability in knowledge-intensive tasks \cite{lewis_rag_2020}, while analyses of transformer expressivity suggest that bounded architectures may face limitations in representing certain structured computations \cite{strobl_formal_languages_2024}.
These insights motivate the roadmap advocated in this paper: enterprise AI systems should treat language models primarily as structured interfaces to external knowledge and algorithmic components, enabling scalability, interpretability, and operational control.

\section{Roadmap: Externalizing Knowledge and Computation for Enterprise AI}
\label{sec:roadmap}

We advocate a pragmatic roadmap for enterprise AI systems that reflects the structural constraints observed in real deployments: widespread experimentation but limited efficient scaling, persistent reliability concerns, and governance deficits \cite{McKinsey2025StateOfAI}. 
Our central recommendation is a clear architectural separation in which task-specific SLMs act as lightweight extraction interfaces, while knowledge and computation are externalized into inspectable and controllable components, enabling scalable, interpretable, and cost-effective enterprise workflows \cite{McKinsey2025StateOfAI}.

\subsection{Architectural decomposition: interface, knowledge, and computation}
We recommend decomposing enterprise AI systems into three explicit layers.

\textbf{(i) Interface layer (SLM extraction):}
We propose to use task-specific SLMs to map unstructured inputs (tickets, emails, logs, policies) into structured representations (entities, slots, normalized fields, query templates) required by downstream systems. 
This reduces inference cost and improves reliability by narrowing the model's responsibility to extraction rather than end-to-end reasoning \cite{opentext_capgemini_wqr_2025,McKinsey2025StateOfAI}. 

\textbf{(ii) Knowledge layer (external knowledge sources):}
We propose to store enterprise knowledge in external systems such as knowledge bases, document indices, structured databases, and policy repositories, rather than encoding it implicitly in model parameters. 
This aligns with enterprise requirements for timely updates, access control, and auditability. It directly addresses the documented adoption barriers of data privacy and integration complexity \cite{opentext_capgemini_wqr_2025,ey_responsible_ai_2025}. 
While existing RAG based systems \cite{rag_structured_naacl_2024} rely on this mechanism for retrieval to an extent, applications still use monolithic LLMs to orchestrate these memory systems or fallback to the pretrained knowledge of the LLM. We propose to further call for a separation of knowledge layer from the language model interface.
This we argue will provide a practical grounding path for reducing hallucinations, enhancing auditability and improving out-of-domain behavior in real-world settings with data drift.

\textbf{(iii) Computation layer (discoverable formal methods):}
We propose to externalize computation into explicit procedures: symbolic rules, deterministic algorithms, constrained solvers, and verifiers that operate over the structured representation and retrieved evidence. 
This layer provides predictable behavior under enterprise constraints, enables auditable decision pathways, and reduces dependence on implicit parametric reasoning for algorithmic subroutines \cite{strobl_formal_languages_2024,bhattamishra_separations_2024}. 
We emphasize that this layer should be \emph{discoverable} and \emph{composable}: rules and tools should be indexed, versioned, and invoked through explicit interfaces to support governance, testing, and incremental updates \cite{spglobal_ai_policy_2025,ey_responsible_ai_2025}.

\subsection{Deployment pathway: from pilots to production}
Previous sections emphasize key problems associated with real world deployments of AI within enterprises beyond pilots. 
We therefore propose a staged pathway that improves production readiness without requiring monolithic end-to-end model upgrades.

\textbf{Stage 1: Start with high-frequency, repeatable workflows:}
We propose to begin with routine enterprise tasks that admit stable schemas and explicit success criteria (e.g., policy QA, structured ticket routing, runbook step selection, compliance checks). 
This matches evidence that organizations struggle to convert broad experimentation into enterprise-scale impact, and success correlates with workflow-level integration rather isolated tool adoption \cite{walkme_digital_adoption_2025,McKinsey2025StateOfAI}.

\textbf{Stage 2: Ground knowledge access and enforce constraints:}
We propose to integrate retrieval and structured knowledge access early, as reliability concerns (including hallucinations) remain a leading reported barrier to enterprise adoption \cite{opentext_capgemini_wqr_2025}. 
Empirically, retrieval augmentation has been shown to reduce hallucination and improve structured correctness in enterprise-style settings, enabling smaller generators when paired with strong retrieval \cite{rag_structured_naacl_2024,yeh2025lumina}.

\textbf{Stage 3: Introduce verifiers and human validation protocols:}
We propose to add verification modules and human-in-the-loop checkpoints at decision boundaries where the cost of error is high (e.g., compliance and security actions). 
This responds directly to governance gaps documented in enterprise adoption studies, where many organizations scale usage faster than they scale oversight and measurement \cite{ey_responsible_ai_2025,spglobal_ai_policy_2025}.

\textbf{Stage 4: Expand to agentic workflows with explicit risk controls:}
As enterprises adopt agentic systems, we recommend treating autonomy as a systems property governed by explicit controls (tool permissioning, audit trails, bounded action spaces), and by specialized SLMs assigned to narrow subtasks, rather than as an emergent property of a single model. 
In practice, the controller should route extraction, planning, tool-argument generation, and verification through distinct modules so that failures remain local and testable. 
This is motivated by forecasts that a substantial fraction of agentic AI projects may be canceled due to escalating costs and unclear value \cite{gartner_agentic_cancel_2025}. 
The theoretical analysis in Section~\ref{sec:appendix}
provides a formal rationale for this design choice: when task-relevant
information or algorithmic structure cannot be encoded reliably within
the parameters of a bounded model, architectures that externalize
knowledge and computation offer a principled path toward reliable
enterprise AI systems.

\subsection{Evaluation and telemetry: making failure modes actionable}
A key benefit of modularization is that it yields decomposable evaluation and actionable telemetry.
We recommend tracking errors and costs at each interface:
(i) extraction errors in the SLM layer,
(ii) retrieval coverage and relevance in the knowledge layer,
and (iii) logical correctness and constraint violations in the computation layer. 
This decomposition improves diagnosability relative to monolithic LLM-only designs, and directly addresses enterprise concerns about measuring impact and governing deployments \cite{spglobal_ai_policy_2025,walkme_digital_adoption_2025}. 

\subsection{Offline use of frontier LLMs to synthesize symbolic assets}
Although our online architecture treats language models primarily as interfaces, we can still leverage frontier LLMs in a static/offline manner to accelerate the creation of symbolic and knowledge artifacts.
For example, offline LLMs can draft candidate schemas, propose rule templates, generate synthetic canonical queries \cite{lyu2026canonical}, and assist in constructing extraction ontologies \cite{lo2024end} and verification checklists, unit tests and so on.
These artifacts can then be validated by domain owners and governed through standard enterprise change-management pipelines, reducing operational risk while preserving the efficiency benefits of SLM-based online inference \cite{McKinsey2025StateOfAI,ey_responsible_ai_2025}.

\subsection{Comparison with standard architecture of LLMs with tool usage and RAG/memory}

\textbf{When should modular SLM--tool architectures be preferred?}
A natural question is under what conditions the proposed modular architecture—based on specialized SLM interfaces and externalized knowledge and computation—provides an advantage over monolithic LLM-based systems integrated with tools and RAG. From the formal framework developed in the appendix \S \ref{sec:modular_theory}, the key distinction arises from how task uncertainty decomposes and where it is resolved. Let the task satisfy \(Y = h(X,Z)\) with \(I(Y;Z \mid X) > 0\), indicating dependence on external knowledge or state. Retrieval and tool usage construct evidence \(E\), reducing the conditional uncertainty \(H(Y \mid X,E)\). When this residual uncertainty is dominated by the interface component (i.e., the mapping from unstructured inputs to structured representations), the problem reduces to a low-entropy extraction task that does not require the full capacity of a large language model.

\textbf{Selection principle.}
Under this regime, there exists a separation between \emph{task complexity} and \emph{model capacity}: if the interface mapping can be reliably implemented by a specialized SLM, then using a frontier LLM yields diminishing returns in accuracy while incurring linearly higher compute, energy, and latency costs due to repeated high-capacity inference. Conversely, when the residual uncertainty remains intrinsically language-dominated (e.g., open-ended generation without strong external grounding or procedural structure), monolithic LLMs remain appropriate.

\textbf{Practical implication.}
We therefore obtain a simple guiding rule: modular SLM--tool architectures should be preferred when correctness depends on external knowledge, explicit procedures, or structured workflows, particularly in high-volume and latency-constrained settings where repeated LLM orchestration amplifies cost. In contrast, monolithic LLMs are suitable for exploratory, low-volume, or inherently open-ended tasks where most of the uncertainty lies in generative reasoning (including orchestration) rather than knowledge retrieval or computation. This distinction explains why modularization improves both efficiency and reliability in enterprise deployments, while remaining complementary to LLM-based approaches in unconstrained settings.
The supporting theory for the practitioner's guide to selection of the strategy based on enterprise tasks is outlined in appendix \ref{sec:modular_theory}


\subsection{Formal Foundations of the Modular Roadmap}
The architectural roadmap presented is grounded in a rigorous formal framework detailed in the Technical Appendix \ref{sec:appendix}, providing the theoretical justification for moving beyond monolithic deployments. In Section \ref{sec:empirical slm}, we characterize the intrinsic information-theoretic limits of SLMs, demonstrating that enterprise tasks requiring deep compositionality or out-of-distribution (OOD) generalization often possess an information complexity ($I_{task}$) that exceeds finite model capacity ($I_{model}$). 

We further establish in Section \ref{sec:knowledgedistil} that rationale-free distillation, the industry standard for creating SLMs is fundamentally information-losing; it creates a ``projection bottleneck'' that prevents the transfer of the teacher's algorithmic structure to the student. Most critically, Section \ref{sec:theoraticalseperation} provides formal proofs of \textit{strict dominance}. We demonstrate that hybrid systems pairing SLM extraction with external knowledge sources bypass the irreducible Bayes error floor faced by purely parametric models (Theorem \ref{thm:strict-adv-mi}). Furthermore, we prove that symbolic reasoners allow these systems to execute algorithmic tasks that are representationally impossible for bounded SLMs (Theorem \ref{thm:strict-adv-comp}). 
We further admit that not all enterprise tasks may benefit from this construction and elucidate the theory on when the proposed framework proves beneficial and provide guidelines for practitioners in selecting the right approach in Section \ref{sec:modular_theory}.
Together, these results confirm that externalizing knowledge and computation is not merely a preference, but a principled requirement for achieving the reliability and scale demanded by industrial intelligence.

\subsection{Interpretation and practitioner guidance}


A key question for practitioners is understanding when the proposed modular architecture---combining specialised SLM interfaces with external knowledge and symbolic computation---is most effective, and when it may be suboptimal. 
As argued in Section~\ref{sec:modular_theory}, the advantage of our framework arises when task uncertainty can be decomposed into knowledge and computation components that can be externalized. 
Conversely, when tasks remain inherently language-dominated or lack stable structure, externalization may introduce additional failure modes.

Table~\ref{tab:task_regimes} summarises representative enterprise tasks across these regimes.

\begin{table*}[t]
\centering
\small
\begin{tabular}{p{2.2cm} p{2.2cm} p{3.2cm} p{5.0cm}}
\toprule
\textbf{Task Category} & \textbf{Suitable for Modular SLM Framework} & \textbf{Error-Prone / Less Suitable} & \textbf{Reasoning} \\
\midrule

\textbf{Structured data extraction} 
& Invoice parsing, log extraction, ticket field normalisation 
& Free-form narrative summarisation across domains 
& When outputs correspond to well-defined schemas, small extraction models suffice. 
Tasks without fixed schemas increase entropy in the interface layer and degrade reliability. \\

\addlinespace

\textbf{Knowledge-grounded QA} 
& Enterprise KB querying, policy lookup, compliance checks 
& Open-ended advisory answers without grounding 
& Tasks with explicit knowledge sources satisfy $I(Y;Z \mid X) > 0$, favouring retrieval + deterministic resolution. 
Purely generative tasks retain residual uncertainty not reducible via external knowledge. \\

\addlinespace

\textbf{Workflow automation} 
& Ticket routing, runbook selection, approval flows 
& Complex decision-making with evolving and ambiguous criteria 
& Stable workflows with explicit rules can be executed by symbolic procedures, reducing failure to extraction errors. 
Ambiguous workflows require latent reasoning not captured by fixed rules. \\

\addlinespace

\textbf{Algorithmic computation} 
& Pricing rules, validation checks, constraint enforcement 
& Multi-step reasoning with unclear decomposition or implicit assumptions 
& Deterministic computation benefits from external execution engines. 
Embedding such logic in a model requires representing algorithmic structure, which may exceed bounded capacity. \\

\addlinespace

\textbf{Agentic orchestration} 
& Tool routing, parameter generation, API invocation planning 
& Long-horizon planning with emergent subgoals 
& Decomposable workflows enable specialised SLMs per stage. 
When planning requires implicit abstraction or long-term state reasoning, modular decomposition may fail. \\

\addlinespace

\textbf{High-volume production tasks} 
& Repetitive queries, monitoring, alert triage 
& Low-frequency exploratory queries 
& Modular systems amortise build cost and reduce per-query inference cost. 
For low-volume tasks, the overhead of system design outweighs efficiency gains. \\

\addlinespace

\textbf{Governance-sensitive tasks} 
& Compliance auditing, regulated decision pipelines 
& Informal or creative reasoning tasks 
& Externalisation enables auditability, traceability, and explicit control. 
Creative tasks prioritise flexibility over strict control, favouring monolithic reasoning. \\

\bottomrule
\end{tabular}
\caption{Task regimes illustrating when modular SLM--tool architectures are effective versus when they may be suboptimal. The key distinction is whether task uncertainty can be decomposed into structured knowledge and computation components, or remains intrinsically language-dominated.}
\label{tab:task_regimes}
\end{table*}

\paragraph{Discussion.}
These examples illustrate a unifying principle: the proposed architecture is most effective when the task can be reduced to a low-entropy interface mapping after external knowledge retrieval and tool execution. In such cases, specialised SLMs suffice, and using a large monolithic model introduces excess computational cost without commensurate gains in accuracy. 

In contrast, when the residual uncertainty remains dominated by unconstrained language generation---for instance in exploratory reasoning, open-ended synthesis, or ambiguous decision tasks---externalization provides limited benefit and may introduce additional integration complexity. This highlights that modularisation is not universally optimal, but becomes advantageous under specific structural and operational conditions characteristic of enterprise workloads.

The decision hinges on whether the task remains \emph{structure-dominated} after externalization.

\begin{quote}
\textbf{Principle.}
If the task can be reduced to an interface-level mapping after retrieval and tool usage, then specialized SLMs are sufficient and LLMs introduce excess computational cost. Otherwise, if the task remains intrinsically open-ended and language-dominated, large models may be required.
\end{quote}

\paragraph{Summary.}
We conclude that modular architectures are not universally superior, but become strictly preferable in regimes characterized by high information separation, strong external knowledge dependence, and high operational scale.


\section{Alternative Views and Our Responses}
\label{sec:apndx_alt_views}

As the paper takes a strong stance on how enterprise AI systems should be architected, we summarize several plausible alternative viewpoints and explain why we nonetheless maintain our position. 

\textbf{Alternative View 1: Scaling will remove the need for external knowledge.}
A natural counter-position is that continued scaling and improved training will allow LLMs to internalize sufficient domain knowledge, reducing reliance on external knowledge bases. 
We view this as plausible in fixed settings,
We view this as plausible in unconstrained settings, but less convincing for enterprise environments where knowledge is proprietary, rapidly changing, and subject to audit and access control. 
Empirical adoption evidence suggests that the dominant bottleneck is often operationalization rather than raw model capability, with many organizations struggling to embed AI reliably into production workflows even as model quality improves \cite{McKinsey2025StateOfAI,opentext_capgemini_wqr_2025}. 
Moreover, maintaining inspectability and timely updates is operationally simpler and more governable through external knowledge stores than through repeated retraining or distillation.

\textbf{Alternative View 2: The bottleneck is data and training quality, not architecture.}
Another viewpoint holds that enterprise limitations stem primarily from insufficient domain data or suboptimal training strategies, and that improved fine-tuning or synthetic data generation could obviate the need for architectural separation. We agree that data quality and task-specific adaptation can improve performance \cite{djuherafixing}, but argue that such improvements do not remove the governance, lifecycle, and auditability requirements central to enterprise deployments. Adoption studies report persistent gaps in measurement and control even among organizations claiming scaled AI usage, indicating that success depends on explicit control surfaces that are difficult to guarantee in purely parametric systems \cite{ey_responsible_ai_2025,spglobal_ai_policy_2025}. Externalized knowledge and explicit procedures therefore remain necessary for enforceable governance and change management.

\textbf{Alternative View 3: Existing enterprise agentic systems already perform RAG and tool usage, and will improve orchestration, so this position is not fundamentally new.}
One might object that because enterprise AI already employs RAG and external tools, our architecture is incremental rather than substantively different. We argue this view conflates simple augmentation with true architectural separation.
In many current systems, retrieval and tool calls are auxiliary, orchestrated by a monolithic LLM acting as the primary policy engine. Consequently, knowledge access and computation remain entangled with parametric reasoning, limiting inspectability and governance.
In contrast, we advocate elevating knowledge, computation, and verification to first-class components. 
In that setting, we propose to distribute the agent's subtasks across specialized SLMs rather than to preserve a single general-purpose policy network.
Our contribution is not the introduction of tools, but the relocation of control from a single LLM to a modular, explicitly governed architecture.

\textbf{Alternative View 4: Some enterprise systems fundamentally require large language models.}
A reasonable counterargument is that certain enterprise workloads are inherently open-ended, interactive, or language-intensive, and therefore benefit from the expressive capacity of large language models rather than specialized SLMs. We agree that frontier LLMs remain valuable for low-frequency, high-ambiguity, or exploratory tasks such as policy drafting, negotiation, or cross-domain synthesis. Our position, however, is not to eliminate LLMs from enterprise systems, but to make their role explicit and selective: while LLMs may be appropriate for exploratory or human-facing interactions, the majority of enterprise-critical workflows are repeatable, high-volume, and governed by explicit constraints. In these operational regimes, removing LLMs from the online control loop and externalizing knowledge and computation yields stronger reliability, auditability, and cost control.

\textbf{Alternative View 5: Continued efficiency improvements in LLMs obviate the need for specialized SLMs.}
Another perspective is that ongoing advances in model efficiency---through architectural improvements, compression, and hardware progress---will render specialized SLMs unnecessary, making it simpler to rely on a single increasingly efficient general-purpose model. We view such efficiency gains as complementary rather than substitutive to architectural separation. Even if inference costs decline, a monolithic model still entangles extraction, reasoning, policy interpretation, and action generation within a shared parameter space, complicating governance, failure attribution, and independent component evolution. In contrast, specialized SLMs coupled with external knowledge and formal procedures localize responsibility and enable targeted validation and updates, suggesting that improved LLM efficiency strengthens the case for selective use of frontier models alongside modular, task-aligned enterprise architectures rather than replacing them.

\textbf{Connection to technical evidence and theory.}
Technical work on retrieval-augmented systems in enterprise-style settings provides evidence that external knowledge access can reduce hallucinations and improve structured correctness, supporting our emphasis on externalized knowledge \cite{rag_structured_naacl_2024}. 
In parallel, theoretical analyses of transformer expressivity indicate that architectural constraints can limit systematic generalization on certain classes of algorithmic tasks, motivating explicit computation and verification mechanisms outside the model \cite{strobl_formal_languages_2024,bhattamishra_separations_2024}. 
Taken together, these alternative perspectives reinforce our central claim: while stronger models and better training will continue to matter, enterprise-grade reliability and scalability are more directly served by architectures that externalize knowledge and computation and use language models as interfaces to these components.

\section{Conclusion}

We take the position that enterprise AI systems should treat language models as \emph{interfaces} rather than monolithic repositories of knowledge and reasoning, externalizing knowledge and computation into dedicated, inspectable components. 
Our theoretical arguments formalize why purely parametric approaches
face intrinsic limits on knowledge coverage and algorithmic fidelity, and why hybrid architectures that pair SLM-based extraction with external knowledge and explicit computation can strictly dominate on knowledge-dependent and structured tasks. 

We conclude by emphasizing a pragmatic implication: the most scalable path to enterprise AI is to modularize systems into (i) SLM-based structured interfaces, (ii) external knowledge layers, and (iii) discoverable formal methods \cite{zhang2025position} and verifiers, supported by component-level evaluation and operational telemetry. 
This decomposition aligns with enterprise requirements for auditability, maintainability, and cost control, and provides a clear roadmap for moving from demonstrations to production-grade deployments at scale.


\bibliographystyle{plain}
\bibliography{slm}


\appendix

\section{Technical Appendix} \label{sec:appendix}

Enterprise applications involve many routine activities that have the potential to be automated by AI. However the cost of using LLMs for these applications could be prohibitive and even unsustainable in the long run. Thus there is a need to consider efficient alternatives such as SLMs that can reliably and efficiently perform specialized enterprise tasks. One caveat in using SLMs for general reasoning or broader tasks is the lack of emergent properties seen in LLMs. Considering the repetitive nature of enterprise tasks, one potential solution we argue is that a static solution using formal methods (code, KBs and so on) could be devised offline perhaps using LLMs that could take an input entities extracted from SLMs. SLMs are thus restricted to performing specilized tasks such as entity extraction and the tak handling is off-loaded to external "tools" that posses the required domain knowledge (e.g. KBs, code etc.).

In this position paper we provide theoretical justification of why LLMs alone do not suffice and we need external tools like KBs for storing data. We take a stand that while LLMs are adept at performing certain enterprise tasks they may not be the most efficient and sustainable solution and propose to use SLMs for specialized enterprise tasks instead. To overcome the generalization limitation of current SLMs, we propose a potential solution and theoretically justify how SLMs can be used in conjunction with formal methods to alleviate the performance gap.


\begin{lemma}[Computable transformation bound]\label{lem:computable-transform}
Let $U$ be a fixed universal Turing machine. For any computable function $g$ and any string $x$,
\[
K\!\left(g(x)\right) \;\le\; K(x) \;+\; c_g,
\]
where $K(\cdot)$ denotes (plain) Kolmogorov complexity with respect to $U$, and $c_g$ is a constant depending only on $g$ (and $U$), not on $x$.
\end{lemma}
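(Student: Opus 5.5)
The plan is to give a direct simulation argument: from a shortest description of $x$ we construct a description of $g(x)$ by prepending a fixed program that computes $g$. Since $g$ is computable, fix a Turing machine $M_g$ that on input $x$ halts with output $g(x)$. Let $p^\ast$ be a shortest program with $U(p^\ast)=x$, so $|p^\ast|=K(x)$. We then build a program $q$ for $U$ that first runs $p^\ast$ to recover $x$ and then applies $M_g$ to $x$. Halting is not an issue: $p^\ast$ halts by definition and $M_g$ halts on every input, $g$ being total computable. If $g$ is only partial, the statement should be read for $x$ in its domain.

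The one technical point, and the main obstacle for \emph{plain} complexity, is encoding the pair $(\langle M_g\rangle, p^\ast)$ as a single input to $U$ without paying a length penalty that grows with $x$. We handle this with a self-delimiting header. We fix an encoding where the description of $M_g$ is given in a prefix-free way, for instance by doubling each bit of $\langle M_g\rangle$ and ending with $01$. We then append $p^\ast$ verbatim, with no delimiter. The universal machine, or more precisely a fixed interpreter $V$ simulated by $U$ via its own constant-length prefix, reads the header until the end marker. Because the header is self-delimiting, $V$ knows exactly where $p^\ast$ begins, and it treats the entire remaining tape as $p^\ast$. It simulates $U(p^\ast)$ to obtain $x$, and then simulates $M_g$ on $x$. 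The resulting length is
\[
|q| \;=\; c_V + 2|\langle M_g\rangle| + 2 + |p^\ast| \;=\; K(x) + c_g,
\]
where we set $c_g := c_V + 2|\langle M_g\rangle| + 2$. This constant depends only on $g$, through the chosen $M_g$, and on $U$, through $c_V$.

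Finally, $U(q)=g(x)$ gives $K(g(x))\le |q| = K(x)+c_g$. This holds for every $x$ with the same constant, which proves Lemma~\ref{lem:computable-transform}. We should remark that the prefix-free header is essential. The naive approach of concatenating two plain programs would require a delimiter for $p^\ast$ costing $O(\log K(x))$ bits, and that cost is precisely what the self-delimiting encoding of the fixed part avoids.
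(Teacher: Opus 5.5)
Your proof is correct and follows essentially the same argument as the paper: a shortest program $p^\ast$ for $x$ is run to recover $x$, and then $g$ is simulated on $x$, which gives $K(g(x))\le K(x)+c_g$. The paper leaves the encoding implicit; you make it explicit by writing the fixed $g$-part self-delimitingly and appending $p^\ast$ verbatim, so the overhead is a constant independent of $x$ even for plain complexity.
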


\begin{proof}
By definition of Kolmogorov complexity, a shortest program for $x$ has length $K(x)$. A program that first reconstructs $x$ from this shortest description and then simulates $g$ on $x$ has total length at most $K(x)+c_g$, where $c_g$ encodes the fixed simulation procedure and the description of $g$. Hence $K(g(x)) \le K(x)+c_g$.
\end{proof}

\begin{lemma}[Bound via parameters and input]\label{lem:W-x-bound}
Let $f_W$ denote the (deterministic) input-output function computed by an LLM with parameter vector $W$. Then for any input $x$,
\[
K\!\left(f_W(x)\right) \;\le\; K(W) \;+\; K(x) \;+\; c,
\]
for a constant $c$ that does not depend on $W$ or $x$ (it depends only on the fixed universal machine and a fixed interpreter of the model’s forward pass).
\end{lemma}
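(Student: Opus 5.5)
The plan is to reduce the statement to Lemma~\ref{lem:computable-transform} by treating the model's forward pass as one fixed computable function of a \emph{pair}. Let $\mathrm{Eval}$ be the fixed interpreter that takes an encoded pair $\langle W, x\rangle$, runs the forward pass of the architecture with parameters $W$ on input $x$, and outputs $f_W(x)$. Because $W$ is a finite-precision vector, $\mathrm{Eval}$ is a total computable function. It does not depend on $W$ or $x$, since it is the same interpreter for every parameter setting. Applying Lemma~\ref{lem:computable-transform} with $g=\mathrm{Eval}$ and input $\langle W,x\rangle$ gives
\[
K\!\left(f_W(x)\right) \;\le\; K\!\left(\langle W,x\rangle\right) + c_{\mathrm{Eval}}.
\]

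The remaining step is to bound the complexity of the pair: $K(\langle W,x\rangle)\le K(W)+K(x)+c'$. I would build a program that contains a shortest program $p_W$ for $W$ followed by a shortest program $p_x$ for $x$. A fixed wrapper runs each program and then forms the pair encoding. Composing this with the $\mathrm{Eval}$ step, the total constant is $c=c_{\mathrm{Eval}}+c'$. This $c$ depends only on $U$, the interpreter, and the wrapper, as the statement requires.

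I expect this pairing step to be the main obstacle. With plain Kolmogorov complexity, the concatenation $p_Wp_x$ is ambiguous, because the wrapper cannot tell where $p_W$ ends. Resolving this costs an extra $O(\log\min(K(W),K(x)))$ bits, and it is known that plain complexity is not subadditive up to an additive constant.

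I would handle this in one of two ways and state the choice explicitly:
\begin{enumerate}
\item Reinterpret $K$ as prefix-free complexity for this lemma. Shortest programs are then self-delimiting, so the wrapper can parse $p_W$ and then $p_x$ unambiguously, and $K(\langle W,x\rangle)\le K(W)+K(x)+c'$ holds with a true constant. Lemma~\ref{lem:computable-transform} holds verbatim for prefix complexity by the same argument.
\item Keep plain complexity and weaken the bound to $K(W)+K(x)+2\log K(W)+c$, using a self-delimiting length header for $p_W$.
\end{enumerate}
I would present the prefix-complexity version as the main proof and remark on the logarithmic correction in the plain case. This correction does not affect any later use of the lemma, where the point is only that the information in $f_W(x)$ beyond $x$ is bounded by roughly $K(W)$, which is finite for finite-precision parameters.
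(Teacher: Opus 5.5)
Your proof is correct and is essentially the paper's argument: the paper likewise applies Lemma~\ref{lem:computable-transform} to the evaluation map $g(W,x)=f_W(x)$ on the pair $\langle W,x\rangle$, and then asserts $K(\langle W,x\rangle)\le K(W)+K(x)+c$ as ``a standard bound for the pairing encoding.'' Your caveat is well founded, because for plain complexity that pairing bound does not hold with a purely additive constant and the paper passes over this, so your prefix-complexity version (or your option 2 for a fixed architecture whose parameters occupy $db$ bits, where $K(W)\le db+O(1)$ lets the $2\log K(W)$ term be absorbed into an architecture-dependent constant that the statement's ``fixed interpreter'' permits) is the rigorous form of the paper's proof.
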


\begin{proof}
Consider the computable function $g(W,x)=f_W(x)$ that, given a description of $W$ and $x$, simulates the forward pass and returns the output. Apply Lemma~\ref{lem:computable-transform} to the pair $(W,x)$ under a standard pairing function $\langle W,x\rangle$ to obtain
\[
K\!\left(f_W(x)\right) \;=\; K\!\left(g(W,x)\right) \;\le\; K(\langle W,x\rangle) + c_g \;\le\; K(W)+K(x)+c,
\]
where the last inequality uses a standard bound for the pairing encoding, absorbing constants into $c$.
\end{proof}

\begin{theorem}[Kolmogorov-complexity view of representational limits]\label{thm:kolmogorov-limit}
Let an LLM have parameter vector $W\in\mathbb{R}^d$ stored with finite precision $b$ bits per parameter, so that $B := db < \infty$ bits suffice to losslessly encode $W$. Let $\mathcal{K}_{\mathrm{LLM}}$ denote the set of objects (strings) that are retrievable from the model without injecting additional information beyond a fixed, constant-length extraction procedure, and let $\mathcal{K}_{\mathrm{world}}$ denote world knowledge, which contains strings of arbitrarily large Kolmogorov complexity. Then
\[
\mathcal{K}_{\mathrm{LLM}} \subsetneq \mathcal{K}_{\mathrm{world}}.
\]
\end{theorem}

\begin{proof}
\emph{Step 1 (Finite descriptive bound for the model).}
By finite-precision storage, the description length of $W$ is bounded: $K(W)\le B + c_0$, where $c_0$ is a constant overhead for architecture and decoding conventions.

\emph{Step 2 (Bound on what can be extracted without extra information).}
By Lemma~\ref{lem:W-x-bound}, any string $s$ produced by the model from an input $x$ satisfies $K(s)\le K(W)+K(x)+c$. If we restrict to extractions that do not inject additional information beyond a fixed, constant-length extractor (e.g., a fixed prompt or enumeration procedure of $O(1)$ description length), then $K(x)=O(1)$ and
\[
K(s)\;\le\; K(W) + O(1)\;\le\; B + O(1).
\]
Hence, any $s\in\mathcal{K}_{\mathrm{LLM}}$ has Kolmogorov complexity bounded by $B+O(1)$.

\emph{Step 3 (World knowledge is unbounded in complexity).}
For each $n\in\mathbb{N}$ there exists a string $s_n$ with $K(s_n)\ge n$ (incompressibility). By assumption, $\mathcal{K}_{\mathrm{world}}$ contains such strings for arbitrarily large $n$.

\emph{Step 4 (Strict inclusion).}
Choose $n > B + C$, where $C$ upper-bounds the additive constants above. Then $s_n\in\mathcal{K}_{\mathrm{world}}$ but $s_n\notin\mathcal{K}_{\mathrm{LLM}}$ by Step~2. Therefore $\mathcal{K}_{\mathrm{LLM}}\subsetneq \mathcal{K}_{\mathrm{world}}$.

\emph{Remark on prompts/generalization.} If one allows arbitrarily long inputs $x$, then the additional bits in $x$ can carry the missing information; Lemma~\ref{lem:W-x-bound} shows $K(f_W(x))$ cannot exceed $K(W)+K(x)+c$. Thus, what is \emph{encoded in the weights} is bounded by $K(W)$ up to additive constants, establishing the strict subset claim under zero (or constant) additional information injection.
\end{proof}

\begin{theorem}[Shannon-capacity view of representational limits]\label{thm:shannon-limit}
Under the same finite-precision assumption ($B=db$ bits suffice to encode $W$), let $H(W)$ denote the Shannon entropy of the random parameter vector (or the information content of a particular $W$ under lossless coding). Then any knowledge encoded in $W$ is upper-bounded by $B$ bits, whereas the information content of $\mathcal{K}_{\mathrm{world}}$ is unbounded. Consequently,
\[
\mathcal{K}_{\mathrm{LLM}} \subsetneq \mathcal{K}_{\mathrm{world}}.
\]
\end{theorem}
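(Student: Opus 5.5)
The strategy mirrors the proof of Theorem~\ref{thm:kolmogorov-limit}, with Shannon entropy and a counting argument taking the place of Kolmogorov complexity. First, I bound the information content of the parameters. Since $W$ ranges over at most $2^{B}$ distinct finite-precision vectors, $H(W)\le \log_2 2^{B} = B$. If instead $W$ is a fixed vector coded losslessly, its code length is at most $B$ bits. Either way, the weights carry at most $B$ bits.

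Next, I make ``knowledge encoded in $W$'' precise, exactly as in Step~2 of Theorem~\ref{thm:kolmogorov-limit}. Let $\mathcal{K}_{\mathrm{LLM}}(W)$ be the set of strings that a fixed extraction procedure $\Pi$ of constant description length produces from $f_W$. This covers a fixed prompt, or an enumeration over a fixed finite prompt set. The procedure $\Pi$ is deterministic, so the map $W\mapsto \mathcal{K}_{\mathrm{LLM}}(W)$ is a function of $W$. The data-processing inequality then gives $I(\mathcal{K}_{\mathrm{LLM}};\mathcal{K}_{\mathrm{world}})\le H(\mathcal{K}_{\mathrm{LLM}}) \le H(W)\le B$. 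This formalizes the statement that any knowledge encoded in $W$ is bounded by $B$ bits.

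For strict inclusion I would use a counting or pigeonhole argument rather than an entropy comparison alone. Suppose $\mathcal{K}_{\mathrm{world}}$ has unbounded information content, meaning that for every $n$ it contains a family of at least $2^{n}$ mutually distinguishable facts. One can take, for instance, the binary strings of length $n$ that record independent world facts. The model can realize at most $2^{B}$ parameter settings, so as $W$ varies it can produce at most $2^{B}$ distinct extracted knowledge sets. Choose $n>B$. Then some fact $s$ in the family is not determined by $W$, and therefore $s\in\mathcal{K}_{\mathrm{world}}\setminus\mathcal{K}_{\mathrm{LLM}}$.

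Equivalently, if $\mathcal{K}_{\mathrm{world}}$ were contained in $\mathcal{K}_{\mathrm{LLM}}$, then $W$ would determine all of $\mathcal{K}_{\mathrm{world}}$. That would force $H(\mathcal{K}_{\mathrm{world}})\le H(W)\le B<\infty$, contradicting the assumption that $\mathcal{K}_{\mathrm{world}}$ has unbounded information content. The inclusion $\mathcal{K}_{\mathrm{LLM}}\subseteq\mathcal{K}_{\mathrm{world}}$ itself is taken by convention, as in Theorem~\ref{thm:kolmogorov-limit}: retrievable correct knowledge counts as world knowledge.

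I expect the main obstacle to be this last step, because the Shannon setting does not directly assign an information content to an individual string. The statement is really about a random or combinatorial family, so I would phrase unboundedness as ``for every $n$, $\mathcal{K}_{\mathrm{world}}$ contains a sub-collection with $H\ge n$ under some distribution.'' With that formulation the pigeonhole bound against $2^{B}$ codewords applies cleanly.

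I would also add the same remark as in Theorem~\ref{thm:kolmogorov-limit} about prompts. Prompts of unbounded length can inject the missing bits, so the result is stated only for constant-information extraction.
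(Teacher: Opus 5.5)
Your route matches the paper's. You first bound $H(W)\le B$ by counting the $2^{B}$ representable parameter vectors. You then argue that an unbounded-information $\mathcal{K}_{\mathrm{world}}$ cannot be injected into a $B$-bit container. Your contradiction paragraph ($\mathcal{K}_{\mathrm{world}}\subseteq\mathcal{K}_{\mathrm{LLM}}$ would force $H(\mathcal{K}_{\mathrm{world}})\le H(W)\le B$) is exactly the paper's Step~3. Your extractor $\Pi$ with $H(\mathcal{K}_{\mathrm{LLM}})\le H(W)$ is a reasonable sharpening of the paper's Step~1 remark about mutual information.

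The pigeonhole paragraph, which you present as the main strict-inclusion step, does not work as written. Knowing that at most $2^{B}$ distinct sets $\mathcal{K}_{\mathrm{LLM}}(W)$ arise as $W$ varies says nothing about how many strings a \emph{single} $\mathcal{K}_{\mathrm{LLM}}(W)$ contains. Nothing in that count stops one such set from containing your entire family of $2^{n}$ facts. So comparing $2^{B}$ with $2^{n}$ cannot produce an $s\in\mathcal{K}_{\mathrm{world}}\setminus\mathcal{K}_{\mathrm{LLM}}(W)$. Also, ``$s$ is not determined by $W$'' is an informational statement, not a failure of set membership.

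The $2^{B}$-versus-$2^{n}$ count is legitimate only when the objects counted are possible world \emph{states}. At most $2^{B}$ of them can equal some $\mathcal{K}_{\mathrm{LLM}}(W)$. So for a world uniform over $2^{n}$ states, $\Pr[\mathcal{K}_{\mathrm{LLM}}(W)=\mathcal{K}_{\mathrm{world}}]\le 2^{B-n}$, even if $W$ is trained on the world. That is the paper's ``no injective encoding'' claim. Your entropy contradiction is related: it needs equality for every realization, so it yields strict inclusion only with positive probability. Neither argument speaks to the single realized world; both give the distributional reading you already anticipate in your ``main obstacle'' remark.

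If you want a deterministic membership statement for a fixed $W$, you need a per-$W$ size bound. One option is your finite probe set $P$: it gives $|\mathcal{K}_{\mathrm{LLM}}(W)|\le|P|$, so strict inclusion follows at once from $\mathcal{K}_{\mathrm{world}}$ being infinite, though $B$ then plays no role. The other is the per-string bound $K(s)\le B+O(1)$ of Theorem~\ref{thm:kolmogorov-limit}. In short, drop or recast the pigeonhole paragraph and keep the contradiction argument as the main step.
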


\begin{proof}
\emph{Step 1 (Finite capacity of the parameter channel).}
Because $W$ is representable with at most $B$ bits, we have $H(W)\le B$. Any set of facts that is \emph{encoded solely in $W$} cannot carry more than $B$ bits of mutual information with $W$; thus the effective information capacity of the parameter channel is finite.

\emph{Step 2 (Unbounded information in world knowledge).}
World knowledge grows without bound: for every $N\in\mathbb{N}$ we can identify descriptions (e.g., data, events, measurements, mathematical tables, random strings) whose shortest faithful encodings exceed $N$ bits. Equivalently, the amount of information required to fully specify all such knowledge cannot be bounded by any finite constant.

\emph{Step 3 (Strict inclusion).}
No injective encoding from an unbounded-information domain into a $B$-bit container exists. Therefore, the subset of world knowledge that can be encoded in $W$ is necessarily strict, yielding $\mathcal{K}_{\mathrm{LLM}} \subsetneq \mathcal{K}_{\mathrm{world}}$.
\end{proof}


Below is a topological view of why we need KBs and why LLMs cannot store the entire world knowledge.
\begin{definition}[Parameter space]
We model the LLM parameter space as the Euclidean manifold
\(
\mathcal{P} := (\mathbb{R}^{d}, \|\cdot\|_{2})
\)
with finite dimension \(d<\infty\).
\end{definition}

\begin{definition}[Knowledge space as an infinite product]
Let \(\mathcal{K}_{\mathrm{world}} := \{0,1\}^{\mathbb{N}}\) denote the space of all infinite binary sequences.
We equip \(\mathcal{K}_{\mathrm{world}}\) with the standard product (ultra)metric
\[
\rho(s,t) \;=\; \sum_{n=1}^{\infty} 2^{-n}\,\mathbf{1}\{s_n \neq t_n\}, \qquad s=(s_n)_{n\ge 1},\; t=(t_n)_{n\ge 1}.
\]
This space is compact, totally bounded, and has \emph{infinite doubling dimension} (proved below).
We interpret elements of \(\mathcal{K}_{\mathrm{world}}\) as codings of arbitrarily rich knowledge states
(e.g., facts, procedures, data, or descriptions), so that covering all of \(\mathcal{K}_{\mathrm{world}}\) amounts
to representing world knowledge without loss.
\end{definition}

\begin{definition}[Encoding map]
We let \(\Phi:\mathcal{P}\to \mathcal{K}_{\mathrm{world}}\) denote an \emph{encoding map} that assigns to each parameter
vector \(W\in\mathbb{R}^{d}\) a canonical code \(\Phi(W)\in\mathcal{K}_{\mathrm{world}}\) of the knowledge
that is \emph{encoded in the weights} (e.g., obtained by a fixed, constant-length extractor which probes the
model on a predetermined countable set of queries and records the responses as bits).
\end{definition}

\medskip

\begin{lemma}[Lipschitz regularity of the encoding]\label{lem:Lipschitz}
Assume the extractor is fixed and the model’s forward map depends continuously on \(W\) for each
probe in the fixed, bounded query set. Then there exists a constant \(L>0\) such that
\[
\rho\!\big(\Phi(W_1),\Phi(W_2)\big) \;\le\; L\,\|W_1-W_2\|_2
\qquad \text{for all } W_1,W_2\in\mathbb{R}^{d}.
\]
In particular, \(\Phi\) is (globally) Lipschitz from \((\mathbb{R}^d,\|\cdot\|_2)\) to \((\mathcal{K}_{\mathrm{world}},\rho)\).
\end{lemma}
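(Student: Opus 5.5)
The plan is to bound $\rho$ coordinatewise, splitting into a \emph{far} regime that is trivial and a \emph{near} regime that carries the content. By definition of $\rho$,
\[
\rho\big(\Phi(W_1),\Phi(W_2)\big)=\sum_{n\ge 1}2^{-n}\,\mathbf{1}\{\Phi_n(W_1)\neq\Phi_n(W_2)\},
\]
where $\Phi_n(W)\in\{0,1\}$ is the bit the fixed extractor records for the $n$-th probe. Two facts are immediate. First, $\rho\le 1$ always. Second, if the first $N$ bits agree, then $\rho\le\sum_{n>N}2^{-n}=2^{-N}$. The first fact disposes of the far regime: for any fixed $\delta>0$, whenever $\|W_1-W_2\|_2\ge\delta$ we get $\rho\le 1\le \delta^{-1}\|W_1-W_2\|_2$. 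So any $L\ge 1/\delta$ handles all well-separated pairs.

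For the near regime I would convert the hypothesis into a \emph{graded stability} statement. The hypothesis is that the extractor is fixed, the query set is bounded, and the forward map is continuous in $W$ for each probe. Each bit is a thresholded readout of a continuous score $s_n(W)$, for instance a logit difference, so it can only flip when $s_n$ crosses its threshold. Because the extractor has constant description length and the queries range over a bounded set, I would use this to extract constants $c>0$ and the following property:
\[
\|W_1-W_2\|_2< c\,2^{-N}\ \Longrightarrow\ \Phi_n(W_1)=\Phi_n(W_2)\ \text{for all } n\le N .
\]
In words, the first $N$ probes are stable under perturbations of size $c\,2^{-N}$. To obtain this I would combine a uniform modulus of continuity for the finitely many scores $s_1,\dots,s_N$ over the bounded query set with a margin of those scores away from threshold. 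The margin for probe $n$ must be allowed to shrink, but no faster than geometrically in $n$.

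Given the graded stability property, the Lipschitz bound follows quickly. Take $0<\|W_1-W_2\|_2<c/2$ and let $N$ be the largest integer with $\|W_1-W_2\|_2<c\,2^{-N}$. Then $c\,2^{-N-1}\le\|W_1-W_2\|_2$. The first $N$ bits agree, so
\[
\rho\big(\Phi(W_1),\Phi(W_2)\big)\le 2^{-N}\le \tfrac{2}{c}\,\|W_1-W_2\|_2 .
\]
Taking $\delta=c/2$ in the far regime, $L:=2/c$ works globally, which is the claimed inequality.

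The main obstacle is deriving the graded stability property from the stated hypothesis. Continuity alone does not stop a bit from flipping under an arbitrarily small perturbation of $W$ when $W$ sits exactly on a decision boundary of some probe. So the step that needs care is justifying that the fixed extractor reads each probe with a margin, uniform over $W$, of order $c\,2^{-n}$. I would try first to build this margin into what ``fixed, constant-length extractor on a bounded query set'' means: the extractor records a bit only once the score clears a threshold band of width proportional to $2^{-n}$, which a constant-length procedure can specify. The uniform modulus of continuity over the bounded query set then turns that band into the radius $c\,2^{-N}$. I would state this reading of the hypothesis explicitly in the proof, since the whole near-regime argument rests on it.
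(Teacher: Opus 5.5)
There is a genuine gap. Your reduction from graded stability to the Lipschitz bound is correct, and you flagged the right step. But that step cannot be repaired under any reading of the hypotheses that leaves $\Phi$ non-constant, because graded stability forces every bit to be independent of $W$. Take $N=1$: then $\Phi_1(W_1)=\Phi_1(W_2)$ whenever $\|W_1-W_2\|_2<c/2$. Cutting the segment between any two points of $\mathbb{R}^d$ into pieces shorter than $c/2$ shows that $\Phi_1$ is constant on $\mathbb{R}^d$, and the same argument works for each $\Phi_n$.

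The lemma's conclusion has the same consequence. If $\rho(\Phi(W_1),\Phi(W_2))\le L\|W_1-W_2\|_2$, then $\|W_1-W_2\|_2<2^{-n}/L$ gives $\rho<2^{-n}$, which rules out a disagreement in bit $n$. So each $\Phi_n$ is locally, hence globally, constant. Topologically, a continuous map from the connected space $\mathbb{R}^d$ into the totally disconnected space $(\{0,1\}^{\mathbb{N}},\rho)$ must be constant.

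Your proposed fix, recording a bit only once the score clears a band of width proportional to $2^{-n}$, does not escape this. $\Phi(W)$ must still be a definite sequence for those $W$ whose score lies inside the band. Whatever you record there, a $\{0,1\}$-valued function that takes both values on the connected set $\mathbb{R}^d$ has a point where it flips under arbitrarily small perturbations. Hysteresis would need memory, which a function of $W$ alone does not have.

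Here is a concrete counterexample that satisfies the hypotheses. Let $d=1$, use a single probe with score $s_1(W)=W$, set $\Phi_1(W)=\mathbf{1}\{W>0\}$, and fix all other bits. Then $W_1=-\epsilon$, $W_2=\epsilon$ give $\rho=1/2$ while $\|W_1-W_2\|_2=2\epsilon$. Separately, a uniform modulus of continuity in $W$ over all of $\mathbb{R}^d$ also fails for multilayer networks, whose outputs are polynomial in the weights.

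So the lemma holds only in the degenerate case where no probe's bit depends on $W$. In that case the doubling-dimension theorem built on it is vacuous, since the image is a single point. The paper's own proof takes a different route: it assigns each thresholded bit a sensitivity constant $L_n$ and sums $L=\sum_n 2^{-n}L_n$. It fails at the same point, because a non-constant $\{0,1\}$-valued function of $W$ has no finite Lipschitz constant. It also asserts that $\sum_n 2^{-n}L_n$ converges without any growth bound on the $L_n$; your dyadic-scale argument at least avoids that second problem.

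If you want a true statement, drop the connected domain. On the finite set of $b$-bit parameter vectors used in the Kolmogorov and Shannon theorems, your far-regime estimate alone gives the inequality, since $\rho\le 1$. Take $L$ to be the reciprocal of the minimum separation between distinct representable parameter vectors. There, however, the strict inclusion already follows from the finiteness of the image, with no doubling argument needed.
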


\begin{proof}
By construction, \(\Phi\) is obtained by evaluating a fixed, countable set of probes and aggregating
(e.g., thresholding/quantizing) the corresponding outputs into bits with geometrically decaying weights \(2^{-n}\)
in \(\rho\). Under standard architectures with Lipschitz activations and bounded probes, each probe output varies
at most linearly with \(\|W_1-W_2\|_2\), with a probe-dependent constant. Summing the resulting contributions with
weights \(2^{-n}\) yields an absolutely convergent series, hence a finite uniform bound
\(L = \sum_{n\ge 1} 2^{-n} L_n\), where \(L_n\) bounds the sensitivity of the \(n\)-th probe’s bit under parameter
perturbations. Therefore \(\rho(\Phi(W_1),\Phi(W_2)) \le L \|W_1-W_2\|_2\).
\end{proof}

\medskip

\begin{definition}[Doubling metric space]
A metric space \((X,d)\) is called \emph{doubling} if there exists an integer \(N\) such that
every ball of radius \(r>0\) can be covered by at most \(N\) balls of radius \(r/2\).
The least such \(N\) is the \emph{doubling constant}; its logarithm (base \(2\)) is the
\emph{doubling dimension}.
\end{definition}

\begin{lemma}[Euclidean space is doubling]\label{lem:Rd-doubling}
\((\mathbb{R}^{d},\|\cdot\|_2)\) is doubling with doubling constant at most \(2^{O(d)}\).
\end{lemma}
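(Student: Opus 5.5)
The plan is the standard volume (packing) argument, carried out for a ball of arbitrary centre $x\in\mathbb{R}^d$ and radius $r>0$. We must cover $B(x,r)$ by boundedly many balls of radius $r/2$, with the bound independent of $x$ and $r$. Translation and scaling invariance of $\|\cdot\|_2$ and of Lebesgue measure reduce everything to $x=0$, $r=1$, but I will keep $r$ explicit so the scale-freeness is visible.

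First, I will choose a maximal $r/2$-separated subset $\{y_1,\dots,y_N\}\subset B(0,r)$, i.e.\ a set with $\|y_i-y_j\|_2> r/2$ for $i\neq j$. A maximal such set exists once we know any separated set is finite, and finiteness follows from the counting step below, so one can build it greedily and the process must stop. By maximality every point of $B(0,r)$ lies within distance $r/2$ of some $y_i$. Hence the balls $B(y_i,r/2)$ cover $B(0,r)$, and it remains to bound $N$.

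Second, I will bound $N$ by volume. The balls $B(y_i,r/4)$ are pairwise disjoint by separation, and all lie inside $B(0,r+r/4)=B(0,5r/4)$. Writing $\omega_d$ for the volume of the unit ball, homogeneity $\mathrm{vol}(B(\cdot,s))=\omega_d s^d$ gives
\[
N\,\omega_d (r/4)^d \le \omega_d (5r/4)^d, \qquad\text{so}\qquad N\le 5^d = 2^{d\log_2 5}=2^{O(d)}.
\]
This bound is independent of the centre and the radius, so the doubling constant is at most $5^d$ and the doubling dimension is at most $d\log_2 5$.

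The only step needing care is the existence and finiteness of the maximal separated set. The volume inequality already shows that any $r/2$-separated subset of $B(0,r)$ has at most $5^d$ points, so a separated set of maximum cardinality exists and is automatically maximal. No other obstacle is expected. If a slightly sharper constant is wanted, one can use closed balls with the separation $\ge r/2$ and obtain the same $5^d$ bound.
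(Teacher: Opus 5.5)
Your proposal is correct and is exactly the volume-packing argument the paper invokes; the paper only asserts that such an argument yields a constant depending on $d$ alone, while you carry it out and get the explicit bound $5^d = 2^{d\log_2 5}$. One convention to fix: with strict separation $>r/2$ the covering balls $B(y_i,r/2)$ must be closed, or else use separation $\ge r/2$ with open balls, and neither choice changes the bound.
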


\begin{proof}
This is standard: volume packing or covering arguments in \(\mathbb{R}^d\) yield a finite constant depending only on \(d\).
\end{proof}

\begin{lemma}[Lipschitz images preserve doubling]\label{lem:lip-preserve-doubling}
If \((X,d_X)\) is doubling and \(F:X\to (Y,d_Y)\) is \(L\)-Lipschitz, then
\(F(X)\subseteq Y\) is doubling (with a constant depending on that of \(X\) and on \(L\)).
\end{lemma}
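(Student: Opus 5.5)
The natural first attempt is to push coverings forward through $F$. Fix $y_0=F(x_0)\in F(X)$ and $r>0$, and let $B_Y(y_0,r)\cap F(X)$ be a ball in the image. I would try to find a set $A\subseteq X$, contained in a bounded number of $X$-balls, with $F(A)\supseteq B_Y(y_0,r)\cap F(X)$. Then I would cover $A$ by $X$-balls of radius $r/(2L)$ using the doubling constant $N_X$ iterated $O(\log_2 L)$ times. Since $F$ is $L$-Lipschitz, each such ball maps into a $Y$-ball of radius $r/2$. That would give a doubling constant of the form $N_X^{O(1+\log_2 L)}$.

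\textbf{Main obstacle.} This argument breaks at the first step, and I expect the lemma as literally stated to be false. A Lipschitz map only gives the inclusion $F(B_X(x,s))\subseteq B_Y(F(x),Ls)$. It gives no control in the other direction: the preimage of a small $Y$-ball need not lie in any bounded union of small $X$-balls. A concrete counterexample is a unit-speed Lipschitz curve $\gamma:[0,\ell]\to\ell^2$ that successively visits the points $a_n e_n$ and $0$, where $(a_n)$ is decreasing and summable but decays slowly, for instance $a_n=1/(n\log^2 n)$. Its domain $[0,\ell]$ is doubling. In its image, however, the ball of radius $r$ about $0$ contains roughly $\#\{n: r/2\le a_n\le r\}$ points $a_n e_n$. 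These points are pairwise at distance about $\sqrt2\,a_n>r$, so each ball of radius $r/2$ contains at most one of them. Their number tends to infinity as $r\to0$, so the image is not doubling. So before writing a proof I would check this counterexample carefully. If it holds, I would not try to prove the statement as given.

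\textbf{Repair.} I would replace the lemma by one of two provable variants, whichever suffices for the later use against the infinite doubling dimension of $\mathcal{K}_{\mathrm{world}}$.

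The first variant assumes $F$ is bi-Lipschitz onto its image, $d_Y(F(x),F(x'))\ge \ell\, d_X(x,x')$. Then the preimage of $B_Y(F(x_0),r)\cap F(X)$ lies in $B_X(x_0,r/\ell)$, and the covering argument above goes through. Covering $B_X(x_0,r/\ell)$ by balls of radius $r/(2L)$ takes $O(\log_2(L/\ell))$ halvings, so the doubling constant is at most $N_X^{O(1+\log_2(L/\ell))}$.

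The second variant keeps $F$ merely Lipschitz and weakens the conclusion to a statement about bounded pieces. For any bounded set $S\subseteq X$ of diameter $D$ and any $\varepsilon>0$, $F(S)$ can be covered by $N_X^{O(\log_2(LD/\varepsilon))}=(LD/\varepsilon)^{O(\log_2 N_X)}$ balls of radius $\varepsilon$. This needs only the forward inclusion and the iterated doubling of $X$. It yields a finite upper box-counting dimension for images of bounded sets, at most $\log_2 N_X$.

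The downstream contradiction would then need to be argued at the level of covering numbers of the compact space $\mathcal{K}_{\mathrm{world}}$, rather than doubling of the whole image.
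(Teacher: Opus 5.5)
Your diagnosis is correct, and it locates exactly the error in the paper's own proof. The paper's argument is your ``natural first attempt''. It picks $x$ with $F(x)=y$, covers $B_X(x,r/L)$ by $N$ balls of radius $r/(2L)$, and notes that their images lie in balls of radius $r/2$. It then asserts that $B_Y(y,r)\cap F(X)$ is covered by these $N$ balls. That last step would need $B_Y(y,r)\cap F(X)\subseteq F\bigl(B_X(x,r/L)\bigr)$, which is a lower (co-Lipschitz) bound on $F$. The Lipschitz hypothesis gives only the reverse inclusion $F\bigl(B_X(x,r/L)\bigr)\subseteq B_Y(y,r)$, and points of the image near $y$ may have preimages far from $x$.

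So Lemma~\ref{lem:lip-preserve-doubling} is false as stated, and the paper's proof does not establish it. Your arclength-parametrized star in $\ell^2$ is a valid counterexample. Both of your repairs are correct:
\begin{itemize}
\item bi-Lipschitz onto the image, giving doubling constant $N_X^{O(1+\log_2(L/\ell))}$;
\item the bound $(LD/\varepsilon)^{O(\log_2 N_X)}$ on covering numbers of images of bounded sets.
\end{itemize}
The second is what the paper's forward-only estimate, iterated, actually yields.

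One step of the counterexample needs tightening. For $a_n,a_m\in[r/2,r]$ you only get $\|a_ne_n-a_me_m\|=\sqrt{a_n^2+a_m^2}\ge r/\sqrt2$. That does not by itself put at most one point in each ball of radius $r/2$. Use the window $r/\sqrt2<a_n<r$ instead:
\begin{itemize}
\item the pairwise distances then exceed $r$, so each ball of radius $r/2$ contains at most one of the points;
\item for $a_n=1/(n\log^2 n)$ the window contains roughly all $n$ between $N$ and $\sqrt2\,N$, where $N\log^2 N\approx 1/r$;
\item so the count still tends to infinity as $r\to0$.
\end{itemize}

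One caution about your last sentence: the covering-number route will not rescue Theorem~\ref{thm:topology-limit} either. If $k$ is the first index where $s$ and $t$ differ, then $2^{-k}\le\rho(s,t)\le 2^{1-k}$. Hence $(\{0,1\}^{\mathbb{N}},\rho)$ is bi-Lipschitz to the standard Cantor ultrametric. It is therefore doubling, and its $\varepsilon$-covering numbers grow only like $1/\varepsilon$.

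It follows that Lemma~\ref{lem:not-doubling} is also false. Its set $S_m$ is $2^{-m}$-separated but has diameter close to $1$, so it does not lie inside any ball of radius $2^{-m}$.

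The strict inclusion $\Phi(\mathbb{R}^d)\subsetneq\mathcal{K}_{\mathrm{world}}$ does hold, but for a purely topological reason. A continuous map from the connected space $\mathbb{R}^d$ into a totally disconnected space is constant.
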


\begin{proof}
Let \(B_Y(y,r)\) be a ball in \(F(X)\). Pick \(x\in X\) with \(F(x)=y\).
Then \(F\big(B_X(x,r/L)\big)\subseteq B_Y(y,r)\).
Cover \(B_X(x,r/L)\) by at most \(N\) balls of radius \(\frac{1}{2}\frac{r}{L}\), where \(N\) is the doubling constant of \(X\).
Applying \(F\), each image ball has radius at most \(r/2\) in \(Y\).
Thus \(B_Y(y,r)\cap F(X)\) is covered by at most \(N\) balls of radius \(r/2\), proving that \(F(X)\) is doubling.
\end{proof}

\medskip

\begin{lemma}[\(\mathcal{K}_{\mathrm{world}}\) is not doubling]\label{lem:not-doubling}
The metric space \(\big(\mathcal{K}_{\mathrm{world}}, \rho\big)\) has infinite doubling dimension; in particular,
it is not doubling.
\end{lemma}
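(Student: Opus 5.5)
The plan is to argue by contradiction through a packing argument. Suppose $(\mathcal{K}_{\mathrm{world}},\rho)$ were doubling with constant $N$. Iterating the definition $k$ times, every ball of radius $r$ can be covered by at most $N^{k}$ balls of radius $r2^{-k}$. Each of these small balls contains at most one point of any set whose points are pairwise more than $2r2^{-k}$ apart. So it suffices to produce, for every $N$, a ball $B(s,r)$ together with more than $N^{k}$ points inside it that are pairwise at $\rho$-distance greater than $r2^{1-k}$, for some fixed $k$. The cleanest target is $k=1$: find a ball containing more than $N$ points that are pairwise more than $r$ apart.

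The points will be built by perturbing coordinates of a base sequence. Fix $s=(0,0,\dots)$ and a radius $r$. For a finite index set $J\subset\mathbb{N}$, consider the $2^{|J|}$ sequences that agree with $s$ off $J$ and are arbitrary on $J$. Two steps are needed.
\begin{enumerate}
\item[(i)] Membership: $\rho(t,s)=\sum_{n\in J}2^{-n}\mathbf{1}\{t_n=1\}<r$. This requires $\sum_{n\in J}2^{-n}<r$, so $J$ must sit beyond roughly $\log_2(1/r)$.
\item[(ii)] Separation: to make the points pairwise far apart, pass to a subfamily $\mathcal{C}\subseteq\{0,1\}^{J}$ with large minimum Hamming distance (a Gilbert--Varshamov code). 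Then any two members differ on many coordinates of $J$, and $\rho(t,t')$ is at least the sum of the weights $2^{-n}$ over the disagreement set.
\end{enumerate}
Combining (i) and (ii) with $|\mathcal{C}|\to\infty$ would show that the doubling constant cannot be finite, which is the claim.

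The main obstacle is step (ii) for the specific weights in $\rho$. Because the weights $2^{-n}$ decay geometrically, the disagreement sum is dominated by the \emph{first} index where $t$ and $t'$ differ. Two further facts follow from the definition of $\rho$.
\begin{itemize}
\item If two sequences agree on their first $m$ coordinates, then $\rho\le 2^{-m}$.
\item If they first differ at coordinate $m+1$, then $\rho\ge 2^{-(m+1)}$.
\end{itemize}
So $\rho$-balls are essentially cylinder sets, and a ball of radius $2^{-m}$ is contained in one cylinder of length $m$. That cylinder splits into two cylinders of length $m+1$, each of diameter at most $2^{-(m+1)}$. This bounds the number of pairwise $r$-separated points in an $r$-ball by a constant, so the code construction cannot by itself push the count to infinity.

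I would therefore first carry out this cylinder computation carefully, since it decides whether the lemma holds for $\rho$ exactly as written. If the computation confirms a uniform bound, the packing argument above genuinely fails, and the statement as worded is not provable. In that case I would state explicitly that the lemma needs a corrected hypothesis. One option is to replace $\rho$ with a metric whose weights do not decay summably, such as a Hamming-type metric on finite truncations with unbounded length. The other is to keep $\rho$ but drop the non-doubling claim. I would not claim a proof of the statement for $\rho$ itself.
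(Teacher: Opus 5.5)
Your cylinder computation is correct, and one more line turns it into a disproof: Lemma~\ref{lem:not-doubling} is \emph{false}, not merely out of reach of the packing method. Here is the completed computation.

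Fix $0<r\le 1$ and let $M\ge 1$ be the least integer with $2^{-M}<r$. Suppose $t\in B(s,r)$ first differs from $s$ at index $j$. Then $2^{-j}\le\rho(s,t)<r$, so $j\ge M$, and $t$ lies in the length-$(M-1)$ cylinder of $s$. That cylinder is the union of four length-$(M+1)$ cylinders. Each has diameter $2^{-(M+1)}=2^{-M}/2<r/2$, so each lies in a ball of radius $r/2$ about any of its points. For $r>1$, the two length-one cylinders, of diameter $1/2<r/2$, already cover the space, and closed balls only change the constant.

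Hence every $r$-ball is covered by at most four $r/2$-balls. So $(\{0,1\}^{\mathbb{N}},\rho)$ is doubling, with doubling dimension at most $2$. Equivalently, $d\le\rho\le 2d$ for the ultrametric $d(s,t)=2^{-\min\{n:\,s_n\neq t_n\}}$. In $d$, every ball is a cylinder that splits into two balls of half the radius. So you should state the conclusion outright: the statement is false.

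The paper's own proof fails on scale bookkeeping. The set $S_m$ is $2^{-m}$-separated, which is $r$-separated, not $2r$-separated. More seriously, $S_m$ lies in no ball of radius $r=2^{-m}$. Indeed $\rho(0^\infty,1^m0^\infty)=1-2^{-m}$, so any ball containing $S_m$ has radius at least $(1-2^{-m})/2$. What the argument really shows is that the whole space needs about $2^m$ balls of radius $2^{-m-1}$, i.e.\ covering numbers $N(\varepsilon)\asymp\varepsilon^{-1}$. Going from radius about $1$ to radius $2^{-m-1}$ takes $m+1$ halvings, and doubling constant $2$ already allows $2^{m+1}$ balls, so there is no contradiction.

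This is exactly your observation that the geometric weights let the first disagreement dominate. As a consequence, the doubling obstruction behind Theorem~\ref{thm:topology-limit} disappears, since both a Lipschitz image of $\mathbb{R}^d$ and $\mathcal{K}_{\mathrm{world}}$ are doubling. That theorem's conclusion would need another argument. For example, $\{0,1\}^{\mathbb{N}}$ is totally disconnected, so any continuous $\Phi$ on the connected space $\mathbb{R}^d$ is constant.

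One refinement of your proposed repair: the culprit is geometric decay of the weights, not summability. Non-summable weights would not even give a finite metric on all of $\{0,1\}^{\mathbb{N}}$. With summable, polynomially decaying weights such as $\rho'(s,t)=\sum_n n^{-2}\mathbf{1}\{s_n\neq t_n\}$, your Gilbert--Varshamov step (ii) goes through unchanged:
\begin{itemize}
\item All $2^k$ perturbations of $0^\infty$ supported on $J=\{k+1,\dots,2k\}$ lie in the ball of radius $1/k$.
\item A code of relative distance $1/4$ gives $2^{ck}$ of them, with $c>0$, pairwise at distance at least $1/(16k)$.
\item A doubling constant $N$ would cover that ball by $N^5$ balls of radius $1/(32k)$, each containing at most one code point. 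This caps the count at $N^5$, a contradiction as $k\to\infty$.
\end{itemize}
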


\begin{proof}
Fix \(m\in\mathbb{N}\) and consider the subset
\(
S_m := \{0,1\}^{m}\times \{0\}^{\mathbb{N}\setminus\{1,\dots,m\}}
\subset \mathcal{K}_{\mathrm{world}}.
\)
For any two distinct elements \(s,t\in S_m\), their first index of disagreement is at most \(m\),
so \(\rho(s,t) \ge 2^{-m}\). Hence \(S_m\) is a \((2^{-m})\)-separated set of size \(|S_m|=2^m\).
Let \(r := 2^{-m}\). Any ball of radius \(r\) can contain at most one point of a
\((2r)\)-separated set, so covering the \(r\)-ball containing all of \(S_m\) requires at least
\(|S_m|=2^m\) balls of radius \(r/2\). Since \(m\) is arbitrary, the doubling constant
must grow without bound, i.e., the space is not doubling.
\end{proof}

\medskip

\begin{theorem}[Topology/geometry view of representational limits]\label{thm:topology-limit}
Let \(\Phi:\mathcal{P}\to\mathcal{K}_{\mathrm{world}}\) be the encoding map defined above.
Assume the regularity in Lemma~\ref{lem:Lipschitz} so that \(\Phi\) is Lipschitz.
Then the encoded knowledge set \(\mathcal{K}_{\mathrm{LLM}} := \Phi(\mathcal{P})\) is a strict subset of
\(\mathcal{K}_{\mathrm{world}}\):
\[
\mathcal{K}_{\mathrm{LLM}} \;=\; \Phi(\mathbb{R}^{d}) \;\subsetneq\; \mathcal{K}_{\mathrm{world}}.
\]
\end{theorem}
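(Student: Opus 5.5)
The plan is to argue by contradiction using the doubling machinery set up just before the statement. The inclusion $\Phi(\mathbb{R}^d)\subseteq\mathcal{K}_{\mathrm{world}}$ is immediate, since $\Phi$ maps into $\mathcal{K}_{\mathrm{world}}$ by definition. So everything rests on strictness: I will show that $\Phi(\mathbb{R}^d)\neq\mathcal{K}_{\mathrm{world}}$. Suppose instead that $\Phi$ is surjective, so that $\Phi(\mathbb{R}^d)=\mathcal{K}_{\mathrm{world}}$.

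The chain of earlier results runs as follows.
\begin{enumerate}
\item By Lemma~\ref{lem:Rd-doubling}, $(\mathbb{R}^d,\|\cdot\|_2)$ is doubling, with constant $N\le 2^{O(d)}$.
\item By Lemma~\ref{lem:Lipschitz}, $\Phi$ is $L$-Lipschitz.
\item Lemma~\ref{lem:lip-preserve-doubling} then gives that $\Phi(\mathbb{R}^d)$ is doubling, with constant at most $N$. The argument in that lemma controls covering numbers with the same $N$ independent of $r$; this holds because Euclidean doubling is scale-invariant.
\item Under the surjectivity hypothesis, this makes $(\mathcal{K}_{\mathrm{world}},\rho)$ itself doubling.
\item This contradicts Lemma~\ref{lem:not-doubling}, which gives $2^m$ points that are $2^{-m}$-separated for every $m$. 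Hence $\Phi$ cannot be onto, and the inclusion is strict.
\end{enumerate}

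The main obstacle is step 3, in applying Lemma~\ref{lem:lip-preserve-doubling} carefully. A ball $B_Y(y,r)\cap F(X)$ contains images of points that are not necessarily within $r/L$ of a single preimage $x$. So the inclusion $F(B_X(x,r/L))\subseteq B_Y(y,r)$ goes the wrong way for a covering argument.

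My first attempt to repair this avoids doubling of the image altogether and works directly with separated sets. The set $S_m$ from Lemma~\ref{lem:not-doubling} lies inside a single $\rho$-ball; its diameter is at most $1$, so that ball can be taken with radius about $1$. Under surjectivity each point of $S_m$ has a preimage. If these preimages can be chosen inside a fixed Euclidean ball $B_X(0,R)$, I can cover $B_X(0,R)$ by $(CR L 2^{m})^{d}$ Euclidean balls of radius $2^{-m-1}/L$. Each of these maps into a $\rho$-ball of radius $2^{-m-1}$, which holds at most one point of the $2^{-m}$-separated set $S_m$. This gives $2^m\le (CRL2^m)^d$, and that is not a contradiction for $d\ge1$.

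So boundedness of preimages is essential, and I would add a compactness argument. $\mathcal{K}_{\mathrm{world}}$ is compact, but the preimages may escape to infinity. To get a genuine contradiction I would instead compare growth rates: the number of $\epsilon$-separated points in the image of a radius-$R$ ball grows like $\epsilon^{-d}$. By contrast, covering all of $\mathcal{K}_{\mathrm{world}}$ needs $2^{m}$ balls at scale $2^{-m}$, which is also polynomial in $1/\epsilon$, with exponent $1$.

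I therefore expect a correct proof to need a stronger ingredient than doubling alone. One option is to use the discreteness of the extractor's thresholded bits: each coordinate of $\Phi$ is locally constant off a measure-zero set. Another is to use cardinality or Hausdorff-dimension considerations at the level of the Lipschitz image of $\sigma$-compact $\mathbb{R}^d$ with respect to a suitably refined metric. I would first try to formalize the argument via Lemma~\ref{lem:lip-preserve-doubling} as stated, invoking it as the paper intends, and flag this step as the delicate one.
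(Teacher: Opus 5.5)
Your steps 1--5 are exactly the paper's proof. You are right to distrust step 3: the inclusion $F(B_X(x,r/L))\subseteq B_Y(y,r)$ only accounts for the part of the image ball that comes from near $x$, and Lipschitz images of doubling spaces need not be doubling. But the fatal problem is step 5, so falling back on ``invoke Lemma~\ref{lem:lip-preserve-doubling} as the paper intends'' leaves a genuine gap, and no repair inside the doubling framework can close it, because $(\mathcal{K}_{\mathrm{world}},\rho)$ \emph{is} doubling. Write $n_0$ for the first index at which $s\neq t$ and set $\delta(s,t):=2^{-n_0}$. Then $\delta\le\rho\le 2\delta$. Open $\delta$-balls are cylinders, and each is the union of two cylinders one level down. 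Going two levels down and using $\rho\le 2\delta$, every $\rho$-ball of radius $r$ is covered by four $\rho$-balls of radius $r/2$. Lemma~\ref{lem:not-doubling} errs because $S_m$ has diameter $1-2^{-m}$, not $2^{-m}$. Having $2^m$ points at separation $2^{-m}$ inside a ball of radius about $1$ is exactly what $m$ halvings with constant $2$ permit; this is your own ``exponent $1$'' observation. Subsets of doubling spaces are doubling, and $\mathcal{K}_{\mathrm{world}}$ has Hausdorff dimension $1\le d$. So no comparison of doubling constants, covering numbers or Hausdorff dimension can separate $\Phi(\mathbb{R}^d)$ from $\mathcal{K}_{\mathrm{world}}$. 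Your inequality $2^m\le(CRL2^m)^d$ failing to give a contradiction is a symptom of this, not of unbounded preimages.

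The missing idea is topological: connectedness. If $\rho(u,v)<2^{-n}$ then $u_n=v_n$. By the Lipschitz bound, the $n$-th bits of $\Phi(W)$ and $\Phi(W')$ therefore agree whenever $\|W-W'\|_2<2^{-n}/L$. Subdivide the segment between arbitrary $W,W'$ into pieces shorter than $2^{-n}/L$; this gives $\Phi(W)_n=\Phi(W')_n$ for every $n$, so $\Phi$ is constant. More generally, any continuous map from connected $\mathbb{R}^d$ into the totally disconnected Cantor space is constant. Hence $\Phi(\mathbb{R}^d)$ is a single point, and the inclusion is strict because $\mathcal{K}_{\mathrm{world}}$ has more than one point. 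This is the correct form of your ``discreteness of the bits'' intuition: under the stated hypothesis each bit is locally constant everywhere, not merely off a null set. It also shows that the hypothesis of Lemma~\ref{lem:Lipschitz} can hold only for an extractor that returns the same code for every $W$. So the theorem is true but says nothing about any nontrivial extractor.
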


\begin{proof}
By Lemma~\ref{lem:Rd-doubling}, \(\mathcal{P}=(\mathbb{R}^d,\|\cdot\|_2)\) is doubling.
By Lemma~\ref{lem:Lipschitz}, \(\Phi\) is Lipschitz.
Hence, by Lemma~\ref{lem:lip-preserve-doubling}, the image \(\Phi(\mathcal{P})\) is doubling.
However, \(\mathcal{K}_{\mathrm{world}}\) is not doubling by Lemma~\ref{lem:not-doubling}.
Therefore \(\Phi(\mathcal{P})\) cannot equal \(\mathcal{K}_{\mathrm{world}}\).
The inclusion \(\Phi(\mathcal{P})\subseteq \mathcal{K}_{\mathrm{world}}\) holds by definition of \(\Phi\),
so the inclusion is strict.
\end{proof}

\medskip

\noindent\textbf{Remarks.}
\begin{itemize}
\item The proof isolates a purely geometric/topological obstruction: the image of a finite-dimensional,
doubling metric under a Lipschitz map remains doubling, whereas the world-knowledge space modeled as
\((\{0,1\}^{\mathbb{N}},\rho)\) has infinite doubling dimension.
\item The Lipschitz assumption reflects the empirical and analytical stability of neural network outputs
with respect to parameter perturbations under bounded probes; it is satisfied by common architectures
with Lipschitz activations and bounded inputs.
\item This complements the Kolmogorov-complexity and Shannon-capacity arguments: there, the obstruction is
\emph{information-theoretic}; here, it is \emph{metric-topological} via doubling dimension.
\end{itemize}


\section{An Information-Theoretic Theory of Language Model Computational Ability}\label{compute_capacity_theory}

In this section we formalize the computational ability of Language Models 
 using information theory. Our goal is to relate (i) the intrinsic 
information complexity of a task, (ii) the information capacity of an SLM, and 
(iii) the mutual information between the two, which ultimately governs whether 
the model can solve, generalize, or exhibit emergent behavior.

\subsection{Task Information Complexity}

Let $T$ denote a task distribution. Let $X$ be inputs and $Y = f^\ast(X)$ the 
outputs under the ground-truth function $f^\ast$. We define:

\begin{definition}[Task Information Complexity]
The information-theoretic complexity of a task $T$ is
\[
I_{\mathrm{task}} := I(Y; f^\ast \mid X),
\]
the amount of information that any agent must encode about the mapping 
$X \mapsto Y$ in order to achieve bounded error.
\end{definition}

Tasks requiring multi-step reasoning, algorithmic manipulation, out-of-domain 
abstraction, or long-horizon dependencies generally have large $I_{\mathrm{task}}$.

\subsection{Model Information Capacity}

Let the SLM have parameter vector $W \in \mathbb{R}^d$ stored with finite 
precision, so that $W$ contains at most $B$ bits of entropy.

\begin{definition}[Model Information Capacity]
The model's capacity is defined as
\[
I_{\mathrm{model}} := I(W; \mathcal{H}_W),
\]
where $\mathcal{H}_W$ is the hypothesis class induced by the parameters. 
Because $W$ contains at most $B$ bits,
\[
I_{\mathrm{model}} \le B.
\]
\end{definition}

This bound includes all information that can be used for computation, reasoning, 
and representation of latent structure.

\subsection{Mutual Information Between Model and Task}

Successful task learning requires that the model encode enough of the target 
function:

\[
I(W; f^\ast) \ge I_{\mathrm{task}}.
\]

Thus, feasibility of a task is characterized by the following condition.

\begin{theorem}[Information-Feasibility Condition]
A task $T$ is solvable by an SLM if and only if
\[
I_{\mathrm{model}} \ge I_{\mathrm{task}}.
\]
\end{theorem}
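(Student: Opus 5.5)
The plan is to make precise what ``solvable'' means, since the statement is only meaningful once that is fixed. I will take it to mean \emph{information-feasible}: there is a training procedure producing parameters $W$ such that $I(W; f^\ast) \ge I_{\mathrm{task}}$. This is exactly the success requirement displayed just before the theorem. With that reading the theorem splits into two directions, and I will handle them separately.

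\textbf{Necessity ($\Rightarrow$).} This direction is a data-processing argument. The model computes its outputs only through the hypothesis $\mathcal{H}_W$ it induces, so any information about $f^\ast$ usable at inference must pass through the deterministic map $W \mapsto \mathcal{H}_W$. I will formalize this by identifying parameters that induce the same hypothesis, that is, by working with $W$ modulo functional equivalence. Under this identification $I(W; f^\ast) = I(\mathcal{H}_W; f^\ast)$. Then
\[
I(\mathcal{H}_W; f^\ast) \le H(\mathcal{H}_W) = I(W; \mathcal{H}_W) = I_{\mathrm{model}},
\]
where the equality $H(\mathcal{H}_W) = I(W;\mathcal{H}_W)$ holds because $\mathcal{H}_W$ is a deterministic function of $W$. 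Combining this with the success requirement $I(W;f^\ast) \ge I_{\mathrm{task}}$ gives $I_{\mathrm{model}} \ge I_{\mathrm{task}}$. The finite-precision bound $I_{\mathrm{model}} \le B$ then shows that tasks with $I_{\mathrm{task}} > B$ are infeasible. This is consistent with Theorems~\ref{thm:kolmogorov-limit} and~\ref{thm:shannon-limit}.

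\textbf{Sufficiency ($\Leftarrow$).} Here I will use an achievability argument in the spirit of source coding. If $I_{\mathrm{model}} \ge I_{\mathrm{task}}$, the parameter channel has enough distinguishable hypotheses to index a code for $f^\ast$ at the resolution that the definition of $I_{\mathrm{task}}$ requires for bounded error. I will then choose the parameter assignment $W = \mathrm{enc}(f^\ast)$ so that $I(W; f^\ast)$ attains $I_{\mathrm{task}}$, and conclude that the task is feasible.

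\textbf{Main obstacle.} The sufficiency direction is where I expect the difficulty. Capacity counted in bits does not guarantee that the architecture can actually realize the required functions. The separation of Assumption~\ref{ass:sep} shows exactly this: a task can be representable in principle yet not implementable by a bounded SLM. So I would first try to add an explicit \emph{realizability} hypothesis, namely that every code word of length at most $I_{\mathrm{model}}$ is realized by some $W$ whose induced function has bounded error on $T$. Under that hypothesis sufficiency is immediate. Without it I expect only the necessity direction to hold in full generality, and I would state the equivalence as conditional on realizability rather than attempt to derive realizability from the architecture.
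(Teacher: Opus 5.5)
Your necessity direction is correct once solvability is \emph{defined} through the induced hypothesis, i.e.\ as $I(\mathcal{H}_W;f^\ast)\ge I_{\mathrm{task}}$. The quotient by functional equivalence is a modelling choice, not a consequence: for the raw parameters, $I(W;f^\ast)$ can exceed $H(\mathcal{H}_W)$ when information sits in functionally irrelevant coordinates. With that definition, the chain $I_{\mathrm{task}}\le I(\mathcal{H}_W;f^\ast)\le H(\mathcal{H}_W)=I_{\mathrm{model}}$ makes explicit everything the paper offers. The paper only asserts the requirement $I(W;f^\ast)\ge I_{\mathrm{task}}$, immediately calls it a characterization, and gives no argument for the converse.

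The genuine gap is sufficiency, and your realizability hypothesis does not close it. The source-coding step has nothing behind it: you claim $I_{\mathrm{model}}\ge I_{\mathrm{task}}$ leaves enough hypotheses to index a code for $f^\ast$ ``at the resolution $I_{\mathrm{task}}$ requires for bounded error''. But $I_{\mathrm{task}}=I(Y;f^\ast\mid X)$ is not an operational description length. When $Y=f^\ast(X)$ it equals $H(Y\mid X)\le\log_2|\mathcal{Y}|$, so every binary task has $I_{\mathrm{task}}\le 1$ bit. Yet a uniformly random Boolean function on $n$-bit inputs (uniform $X$) needs $H(\mathcal{H}_W)\ge 2^n(1-h(\epsilon))$ to reach expected error $\epsilon<1/2$. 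This follows from the rate--distortion converse, which holds at every block length ($h$ is the binary entropy). So under any error-based reading the ``if'' direction is false. A hypothesis that every short code word is realized by some $W$ with bounded error on $T$ can only hold when the task is already solvable; it assumes the conclusion.

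Your information-feasible reading fails as well. Take $X$ degenerate, and let $f^\ast$ put mass $0.9$ on one function and spread the rest uniformly over $2^{40}$ functions with distinct values at that point. Then $I_{\mathrm{task}}=h(0.1)+4\approx 4.47$. With $32$ realizable hypotheses, $I_{\mathrm{model}}=5$, whether read as $\log_2 32$ or as $H(\mathcal{H}_W)$ for $W$ uniform over them. Yet every channel $f^\ast\to W$ into these hypotheses gives $I(W;f^\ast)\le h(0.1)+0.1\log_2 31<1$; the channel can be taken deterministic without loss of generality, by convexity of mutual information in the channel. So the encoding $W=\mathrm{enc}(f^\ast)$ you planned cannot exist. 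Only the ``only if'' direction is provable. Any converse needs $I_{\mathrm{task}}$ replaced by an operational quantity, such as a rate--distortion function for $f^\ast$ with one-shot slack, together with a realizability assumption that is not circular.
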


\subsection{Limits on Generalization and Emergence}

Let $f^\ast_{\mathrm{in}}$ correspond to the in-distribution mapping and 
$f^\ast_{\mathrm{OOD}}$ to the out-of-distribution mapping. The additional 
information required to solve the out-of-domain version is

\[
\Delta I_{\mathrm{OOD}} = I_{\mathrm{task, OOD}} - I_{\mathrm{task, in}}.
\]

Since the learned parameters $W$ only encode information from the 
training distribution, we have:

\[
I(W; f^\ast_{\mathrm{OOD}}) \approx I(W; f^\ast_{\mathrm{in}}) \le I_{\mathrm{model}}.
\]

If $I_{\mathrm{task, OOD}} > I_{\mathrm{model}}$, then generalization is provably 
impossible:

\begin{corollary}[Generalization Limit]
If $I_{\mathrm{task, OOD}} > I_{\mathrm{model}}$, an SLM cannot generalize to 
out-of-domain tasks or exhibit emergent behavior that requires additional 
information not encoded in $W$.
\end{corollary}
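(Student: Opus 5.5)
The corollary follows from the Information-Feasibility Condition together with the displayed approximation $I(W; f^\ast_{\mathrm{OOD}}) \approx I(W; f^\ast_{\mathrm{in}}) \le I_{\mathrm{model}}$. I will argue by contradiction.

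\textbf{Step 1: suppose the SLM generalizes.} Assume the SLM with parameters $W$ solves the out-of-domain task, meaning it achieves bounded error on the distribution governed by $f^\ast_{\mathrm{OOD}}$. By the necessity condition stated just before the theorem, successful task learning requires
\[
I(W; f^\ast_{\mathrm{OOD}}) \ge I_{\mathrm{task, OOD}}.
\]

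\textbf{Step 2: bound what $W$ can carry about the OOD map.} Training exposes $W$ only to data generated by $f^\ast_{\mathrm{in}}$. So $f^\ast_{\mathrm{OOD}} \to f^\ast_{\mathrm{in}} \to (\text{training data}) \to W$ forms a Markov chain, with the first link given by $f^\ast_{\mathrm{OOD}}$ restricted to the training support. The data-processing inequality then gives
\[
I(W; f^\ast_{\mathrm{OOD}}) \le I(W; f^\ast_{\mathrm{in}}).
\]
This is the rigorous one-sided version of the paper's ``$\approx$''. Next I bound $I(W; f^\ast_{\mathrm{in}}) \le H(W) \le B$. I also want to identify this quantity with $I_{\mathrm{model}}$. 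For that I note that $f^\ast_{\mathrm{in}}$ influences predictions only through the hypothesis $\mathcal{H}_W$ that $W$ induces, so $I(W; f^\ast_{\mathrm{in}}) \le I(W; \mathcal{H}_W) = I_{\mathrm{model}}$.

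\textbf{Step 3: conclude.} Chaining the bounds gives $I_{\mathrm{task, OOD}} \le I(W; f^\ast_{\mathrm{OOD}}) \le I_{\mathrm{model}}$, which contradicts the hypothesis $I_{\mathrm{task, OOD}} > I_{\mathrm{model}}$. The same argument covers ``emergent behavior requiring additional information not encoded in $W$'': such behavior is by definition a task whose required information exceeds what $W$ can encode, so the same inequality chain excludes it. One could instead cite the ``only if'' direction of the Information-Feasibility Condition directly with $T$ taken to be the OOD task. Step 2 is still needed to justify that the relevant capacity is the one learned from in-distribution data.

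\textbf{Main obstacle.} The hard step is Step 2's identification $I(W; f^\ast_{\mathrm{in}}) \le I_{\mathrm{model}}$ together with the Markov-chain claim. The paper defines $I_{\mathrm{model}}$ as $I(W;\mathcal{H}_W)$, not as information about any target, and the paper only asserts the OOD relation approximately. My first attempt is to use the blunt bound via $H(W)\le B$ and state the corollary with $B$ in place of $I_{\mathrm{model}}$. I would then observe that $I_{\mathrm{model}} \le B$ and that $I(W;\cdot) \le H(W) = I(W;W)$, which equals $I(W;\mathcal{H}_W)$ when the parametrization is identifiable. That recovers the stated form under the mild assumption that $W \mapsto \mathcal{H}_W$ is injective.
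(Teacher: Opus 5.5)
Your proposal is correct and is essentially the paper's own argument: the paper obtains the corollary by combining the requirement $I(W;f^\ast)\ge I_{\mathrm{task}}$ with the chain $I(W;f^\ast_{\mathrm{OOD}})\approx I(W;f^\ast_{\mathrm{in}})\le I_{\mathrm{model}}<I_{\mathrm{task, OOD}}$, which is exactly your contradiction. Your data-processing step, which replaces the paper's ``$\approx$'' by the one-sided bound $I(W;f^\ast_{\mathrm{OOD}})\le I(W;f^\ast_{\mathrm{in}})$, and your injectivity caveat for $I(W;f^\ast_{\mathrm{in}})\le I_{\mathrm{model}}$ (which the paper simply asserts) are sound sharpenings; drop the ``$f^\ast_{\mathrm{in}}$ influences predictions only through $\mathcal{H}_W$'' justification, which does not by itself bound $I(W;f^\ast_{\mathrm{in}})$, and rely instead on your $H(W)$ argument.
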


\subsection{Limits on Computation}

During inference, the model outputs a distribution $p_W(y \mid x)$. By the 
data-processing inequality:

\[
I(\text{outputs}; f^\ast \mid X) \le I(W; f^\ast).
\]

Thus the model's computational ability for a task $T$ is bounded by

\[
I(\text{outputs}; Y) \le \min(I_{\mathrm{model}}, I_{\mathrm{task}}).
\]

\begin{corollary}[Computational Capacity Limit]
If $I_{\mathrm{model}} < I_{\mathrm{task}}$, then even on-distribution, the SLM 
cannot represent the full computation required to solve $T$.
\end{corollary}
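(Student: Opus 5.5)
The plan is to obtain the corollary directly from the data-processing chain stated just before it, combined with the capacity bound $I(W;f^\ast)\le I_{\mathrm{model}}$. The argument has three steps, taken in order.

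\emph{Step 1: learned parameters are a Markov chain.} I would treat the learned parameters as a random variable produced from the ground-truth function only through the training data. Because the data are generated by $f^\ast$ and the training algorithm sees nothing else,
\[
f^\ast \to \text{data} \to W \to \text{outputs}
\]
is a Markov chain given $X$.

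\emph{Step 2: bound the information about $f^\ast$.} Applying the data-processing inequality to this chain gives
\[
I(\text{outputs}; f^\ast \mid X) \le I(W; f^\ast).
\]
I would then bound $I(W;f^\ast)\le H(W)\le B$ and, more sharply, $I(W;f^\ast)\le I_{\mathrm{model}}$. The second bound follows because $f^\ast$ can influence the computed function only through the hypothesis $\mathcal{H}_W$ that $W$ induces, so the same Markov argument applies with $\mathcal{H}_W$ in place of the outputs.

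\emph{Step 3: contradiction via a necessity direction.} Suppose the SLM represented the full computation required for $T$ on-distribution. By the definition of $I_{\mathrm{task}}$, any agent achieving bounded error must encode at least $I_{\mathrm{task}}$ bits about the map $X\mapsto Y$. Hence $I(\text{outputs};f^\ast\mid X)\ge I_{\mathrm{task}}$. Chaining this with Step 2 gives
\[
I_{\mathrm{task}} \le I(W;f^\ast) \le I_{\mathrm{model}},
\]
which contradicts the hypothesis $I_{\mathrm{model}}<I_{\mathrm{task}}$. This is just the contrapositive of the ``only if'' direction of the Information-Feasibility Condition theorem, so I would cite that theorem directly and present the data-processing chain as its justification.

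\emph{Main obstacle.} The hard part is making the necessity claim in Step 3 rigorous. As written, $I_{\mathrm{task}}=I(Y;f^\ast\mid X)$ is defined as what any agent ``must encode,'' but a genuine proof needs a Fano-type lower bound: achieving error $\le\delta$ should force $I(\text{outputs};f^\ast\mid X)\ge I_{\mathrm{task}}-h(\delta)-\delta\log|\mathcal{Y}|$. I would first try to derive that bound via Fano's inequality applied to $f^\ast$ given $(X,\text{outputs})$. I would then either absorb the slack into the strict gap $I_{\mathrm{task}}-I_{\mathrm{model}}>0$ for small enough $\delta$, or interpret ``represent the full computation'' as exact recovery with $\delta=0$, in which case the slack vanishes.
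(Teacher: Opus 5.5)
Your route is the paper's own: data processing from the outputs back to the model, a cap by $I_{\mathrm{model}}$, and comparison with $I_{\mathrm{task}}$ via the feasibility condition. Your Fano treatment of the necessity direction is more careful than the paper's bare assertion.

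\textbf{The gap.} It sits in the link the paper merely asserts and you try to justify, namely $I(W;f^\ast)\le I_{\mathrm{model}}=I(W;\mathcal{H}_W)$. Data processing on $f^\ast\to W\to\mathcal{H}_W$ gives $I(\mathcal{H}_W;f^\ast)\le I(W;f^\ast)$. That bounds the information in the hypothesis by the information in $W$, which is the reverse of what you need. The inequality you need is in fact false in general. $W$ can store bits about $f^\ast$ in coordinates the forward pass ignores, for instance a copy of some training labels. Then $I(W;f^\ast)>0$, while $I(W;\mathcal{H}_W)=H(\mathcal{H}_W)$ is $0$ if the computed hypothesis does not vary. (This reads $\mathcal{H}_W$ as the hypothesis selected by $W$; read literally as a fixed class, $I(W;\mathcal{H}_W)=0$ and the statement degenerates.) So the chain $I_{\mathrm{task}}\le I(W;f^\ast)\le I_{\mathrm{model}}$ does not go through as written.

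\textbf{The repair.} Do not route the argument through $W$ at all. The outputs depend on $W$ only through the computed hypothesis, and the test input $X$ is drawn independently of $(f^\ast,W)$. Hence, given $X$, you have $f^\ast\to\mathcal{H}_W\to\widehat{Y}$, and therefore $I(\widehat{Y};f^\ast\mid X)\le I(\mathcal{H}_W;f^\ast)\le I(W;\mathcal{H}_W)=I_{\mathrm{model}}$. The second inequality is the \emph{other} data-processing inequality for $f^\ast\to W\to\mathcal{H}_W$.

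\textbf{A smaller correction.} For the lower bound, apply Fano to $Y$ given $(X,\widehat{Y})$, not to $f^\ast$. When $Y=f^\ast(X)$ this gives $I(\widehat{Y};Y\mid X)\ge I_{\mathrm{task}}-h(\delta)-\delta\log|\mathcal{Y}|$, with $h$ the binary entropy. Then use $I(\widehat{Y};Y\mid X)\le I(\widehat{Y};f^\ast\mid X)$, which holds because $Y$ is determined by $(f^\ast,X)$.

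With these two substitutions your contradiction argument is sound, including absorbing the Fano slack into the gap $I_{\mathrm{task}}-I_{\mathrm{model}}>0$.
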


\subsection{Unified Characterization}

We may summarize the constraints as follows:

\begin{itemize}
    \item (Feasibility) \quad A task is solvable iff $I_{\mathrm{model}} \ge I_{\mathrm{task}}$.
    \item (Generalization) \quad Out-of-domain success requires $I_{\mathrm{model}} 
    \ge I_{\mathrm{task, OOD}}$.
    \item (Emergence) \quad Latent algorithmic abilities require 
    $I_{\mathrm{latent}} \le I_{\mathrm{model}}$.
    \item (Computation) \quad Model outputs satisfy 
    $I(\text{outputs}; Y) \le \min(I_{\mathrm{model}}, I_{\mathrm{task}})$.
\end{itemize}

This completes the information-theoretic theory of SLM computational ability.


\section{Interpretation: Empirical Limitations of SLMs Through the Lens of Information Theory} \label{sec:empirical slm}

The information-theoretic framework developed in the previous section predicts 
two fundamental classes of limitations in Small Language Models (SLMs): 
(1) insufficient parametric information capacity $I_{\mathrm{model}}$, and 
(2) a mismatch between task information complexity $I_{\mathrm{task}}$ and the bits 
that an SLM can store. 
In this section, we show that these theoretical predictions are fully consistent 
with empirical findings in the literature.

\subsection{Limitations in Reasoning, Compositionality, and Emergence}

The theory predicts that tasks requiring multi-step reasoning, algorithmic 
structure, and deep compositionality have large information complexity 
$I_{\mathrm{task}}$, often exceeding the parametric capacity of SLMs.  
This matches broad empirical observations.

Large-scale surveys of SLMs (100M--5B parameters) 
find that such models significantly underperform larger LLMs on mathematical 
reasoning, compositional generalization, and algorithmic tasks, even when 
trained on strong datasets%
\footnote{Lu et al.\ provide evidence that SLMs lag in reasoning, math, and 
long-context tasks. 
}. 
Systematic reviews further emphasize that hallucination, brittle reasoning, and 
weak generalization persist despite architectural innovations%
\footnote{Corradini et al.\ characterize reasoning and generalization as key SLM 
failures. 
}. 
Fine-grained analysis of 72 SLMs shows that reasoning ability is extremely 
sensitive to training data and post-training methods, confirming that raw model 
capacity is insufficient to encode reusable algorithmic primitives%
\footnote{THINKSLM identifies reasoning fragility and sensitivity to data quality. 
}. 

Formal-language-theoretic analyses show that transformers with realistic 
precision and attention mechanisms cannot represent languages requiring 
unbounded counting, nesting, or higher-order logical structure. These results 
align with the theoretical condition $I_{\mathrm{model}} < I_{\mathrm{task}}$ for 
high-depth compositional languages%
\footnote{Strobl et al.\ (TACL 2024) and Chiang (Simons Institute) show strict 
expressivity limits in realistic transformer settings. 
}. 
Further interpretability work shows that algorithmic primitives do not emerge 
spontaneously in small models and must be induced by targeted finetuning%
\footnote{Work on algorithmic primitives demonstrates their absence in baseline 
SLMs and the need for specialized training. 
}. 

\subsection{Limitations in Out-of-Domain Generalization}

The theory predicts that out-of-domain (OOD) generalization requires 
$I_{\mathrm{model}} \ge I_{\mathrm{task,OOD}}$, but small models encode only 
in-distribution patterns. This matches empirical results.

Empirical surveys repeatedly observe SLM failures in out-of-distribution reasoning, 
novel compositions, and long-tail generalization, consistent with insufficient 
mutual information $I(W; f^\ast_{\mathrm{OOD}})$%
\footnote{SLM surveys document systematic OOD failures. 
}. 
Work on multimodal reasoning further shows that even when individual skills are 
learned, SLMs struggle to compose these skills in novel cross-task or cross-modal 
configurations, revealing deficits in information needed for OOD composition%
\footnote{Compositionality failures in multimodal VLM/SLM systems confirm this limit. 
}. 
RAG surveys emphasize SLM dependence on retrieval to compensate for missing world 
knowledge, which matches the theoretical requirement for external information channels 
to augment $I_{\mathrm{model}}$%
\footnote{RAG surveys highlight that retrieval is essential for SLM factuality. 
}. 

\subsection{Limits on Computation Itself}

Tasks requiring internal algorithmic state, recursion, long-horizon dependency tracking, 
or multi-step verification have high information complexity. 
SLMs with limited $I_{\mathrm{model}}$ cannot implement such algorithms.

Formal-language separations show that one-layer or low-width transformers fail on 
Dyck languages, equality tests, index lookup, and other algorithmic tasks unless 
their computational capacity is increased substantially%
\footnote{Transformer--RNN separations show that small transformers require large 
width for basic algorithmic tasks. 
}. 
Parity studies provide lower bounds showing that transformers require more heads, 
layers, or precision to compute even simple Boolean functions, confirming the 
theoretical computational bound%
\footnote{Parity expressivity results demonstrate strict computational limits. 
}. 
In-context learning (ICL) analyses further show that SLMs do not implement implicit 
gradient descent in realistic NLP settings, undermining the notion that small models 
can simulate optimization algorithms internally%
\footnote{ICL $\neq$ GD under real model assumptions; SLMs lack necessary internal 
information propagation. 
}. 

\subsection{Compression, Pruning, and Quantization}

The theory predicts that reducing parameter count shrinks $I_{\mathrm{model}}$, 
increasing the mismatch with $I_{\mathrm{task}}$.

THINKSLM shows that pruning destroys reasoning ability while quantization largely 
preserves it, indicating that pruning reduces the effective information-carrying 
capacity of the model%
\footnote{Pruning disrupts SLM reasoning; quantization is less harmful. 
}. 
Distillation studies also show that unless rationales and curricula are used, 
student SLMs lose essential task information, consistent with the model-capacity bound%
\footnote{Distillation without rationales leads to substantial information loss. 
}. 

\subsection{Data-Optimal Training as a Partial Remedy}

Although the theory imposes strict upper bounds on $I_{\mathrm{model}}$, 
empirical work shows that high-quality data can help saturate this bound without 
exceeding it.

The Phi series demonstrates that ``textbook-quality'' curated and synthetic datasets 
dramatically improve the performance of SLMs, pushing them near their theoretical 
information limits%
\footnote{Phi models show that curated data maximizes effective capacity. 
}. 
Scaling-law analyses reveal that many SLMs are undertrained relative to their size; 
providing more tokens improves performance, but cannot surpass parametric limits%
\footnote{Kaplan--Chinchilla reconciliation confirms undertraining and the importance 
of token count. 
}.  

\subsection{Summary}

The theory gives a clean explanation for why:

\begin{enumerate}
\item Small models are strong on compression‑friendly tasks (patterns with low entropy).
\item Large models exhibit emergent behaviors: they cross the information threshold.
\item Retrieval‑augmented systems outperform parametric SLM‑only systems: retrieval expands the information channel.
\item Prompting cannot substitute for parametric deficiency: prompts only add input‑information $I(X)$, not model‑information $I(W)$.
\end{enumerate}
Across domains, empirical findings confirm the information-theoretic predictions: 
SLMs have limited mutual information with high-complexity tasks, cannot generalize 
OOD without external information, and cannot simulate algorithmic structure beyond 
their parametric capacity. Compression exacerbates these limits, while high-quality 
training data helps maximize, but never exceed, the intrinsic bound set by model size.


\subsection{Agentic Decomposition and Specialized SLMs}

Agentic enterprise workflows differ from one-shot prediction because they decompose into stage-specific decisions with distinct information requirements. Let an agentic workflow consist of stages $j=1,\dots,m$, where stage $j$ computes
\begin{equation}
Y_j = h_j(X_j, Z_j),
\end{equation}
with local input $X_j$, local external knowledge or tool state $Z_j$, and output $Y_j$.
Define the stage information requirement
\begin{equation}
I_j := I(Y_j; Z_j \mid X_j),
\end{equation}
and the total agentic requirement $I_{\mathrm{agent}} := \sum_{j=1}^m I_j$.

\begin{definition}[Decomposable agentic workflow]
An agentic workflow is decomposable if its total loss factorizes as
\begin{equation}
\mathcal{L}(W_1,\dots,W_m) = \sum_{j=1}^m \lambda_j \mathcal{L}_j(W_j),
\end{equation}
and each stage can be executed using only local state plus external tools or memory, without requiring the parameter vector of any other stage.
\end{definition}

\begin{theorem}[Specialized SLMs are information-efficient for agentic workflows]\label{thm:agentic-specialization}
Let a decomposable agentic workflow have stage requirements $I_1,\dots,I_m$. Suppose each specialized SLM $S_j$ has capacity $I_{\mathrm{model}}^{(j)} \ge I_j$, while a monolithic controller has capacity $I_{\mathrm{model}}^{\mathrm{shared}}$. If
\begin{equation}
\sum_{j=1}^m I_j > I_{\mathrm{model}}^{\mathrm{shared}},
\end{equation}
then no single shared SLM can encode the full workflow without irreducible error on at least one stage. In contrast, the modular system obtained by training specialized SLMs for the individual stages and composing them through external tools can, in principle, realize every stage exactly. Hence specialized training is strictly more information-efficient than monolithic training whenever the workflow's information demand is separable across stages but exceeds the shared budget.
\end{theorem}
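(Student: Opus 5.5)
The plan is to reduce the statement to the Information-Feasibility Condition of Section~\ref{compute_capacity_theory}, applied once to the shared model and once to each stage. Two directions are needed: a converse for the monolithic controller and an achievability argument for the modular system. The strict-efficiency conclusion then follows by comparing the two.

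\emph{Converse for the shared model.} Let a single parameter vector $W$ serve all stages. To execute stage $j$ without error, $W$ must carry the information $I_j$ about the stage map $h_j$ that is not supplied by the local input $X_j$. By the data-processing bound $I(\text{outputs};f^\ast\mid X)\le I(W;f^\ast)$, this gives
\[
I(W;h_j)\ge I_j .
\]
I then need to aggregate these per-stage requirements into
\[
I(W;h_1,\dots,h_m)\ge \sum_{j} I_j .
\]
For this I would use the chain rule $I(W;h_1,\dots,h_m)=\sum_j I(W;h_j\mid h_{<j})$. The task is to show that each conditional term is still at least $I_j$.

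This is where decomposability enters. Each stage must be executable from its local state plus external tools, without reference to any other stage's parameters, and the loss factorizes. I would formalize this as the assumption that the stage maps $h_j$ are mutually independent. Under that assumption, conditioning on $h_{<j}$ does not reduce the information $W$ must hold about $h_j$, and the sum follows. Combining it with $I(W;h_1,\dots,h_m)\le I_{\mathrm{model}}^{\mathrm{shared}}<\sum_j I_j$ gives a contradiction. Hence some stage $j$ has $I(W;h_j\mid h_{<j})<I_j$. By the "only if" direction of the feasibility condition (equivalently, Corollary on Computational Capacity Limit), that stage incurs irreducible error.

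\emph{Achievability for the modular system.} For each $j$ we have $I_{\mathrm{model}}^{(j)}\ge I_j$, so the "if" direction of the Information-Feasibility Condition gives a parameter $W_j$ that realizes $h_j$ exactly. Composition through external tools and memory passes $Y_j$ into $X_{j+1}$ and supplies $Z_j$ from the knowledge layer. By decomposability, no $W_k$ is needed at stage $j\neq k$. Errors therefore do not compound, and the argument of Lemma~\ref{lem:hybrid-bound}, applied stagewise, shows that the composed pipeline is exact whenever every stage is. The total capacity used is $\sum_j I_{\mathrm{model}}^{(j)}$, spread across separate budgets rather than one shared budget. This yields the strict-efficiency claim.

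\emph{Main obstacle.} The hardest step is justifying the additivity $I(W;h_{1:m})\ge\sum_j I_j$. The stage requirements are defined as $I(Y_j;Z_j\mid X_j)$, not as information about $h_j$, and stages could share structure, which would make the bound subadditive. I would first try to make the independence of the $h_j$ part of the definition of a decomposable workflow, reading the factorized loss as independent stage-specific priors. I would then translate $I_j$ into a lower bound on $I(W;h_j)$ by noting that the $Z_j$-dependent part of $Y_j$ must be stored in $W$ when $W$ is not given $Z_j$. If that translation fails, I would state the result under the explicit hypothesis that $I_{\mathrm{agent}}$ lower-bounds the joint task complexity $I_{\mathrm{task}}$ of the whole workflow.
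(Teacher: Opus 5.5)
Your skeleton is the paper's. It too applies the information-feasibility condition once per specialist and once to the shared controller. But the paper simply asserts that the controller ``must internalize the entire agentic policy, whose total requirement is $I_{\mathrm{agent}}=\sum_j I_j$,'' citing the $B$-bit bound of Theorem~\ref{thm:kolmogorov-limit}. Your chain-rule step improves on that assertion. If the stage maps are mutually independent, then $I(W;h_j\mid h_{<j}) = H(h_j)-H(h_j\mid W,h_{<j}) \ge I(W;h_j)$, so $I(W;h_{1:m})\ge\sum_j I(W;h_j)$, and the contradiction with the shared budget follows. However, independence is an added hypothesis. It does not follow from the paper's notion of decomposability (factorized loss, no cross-stage parameters), and something like it is necessary. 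For example, with $m=2$ and $h_1=h_2$, a shared $W$ pays for that map once, so the conclusion fails whenever $I_1\le I_{\mathrm{model}}^{\mathrm{shared}}<2I_1$. The paper's proof makes the same additivity assumption without saying so.

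The step that does not work as planned is your ``translation'' route. You want to turn $I_j=I(Y_j;Z_j\mid X_j)$ into $I(W;h_j)\ge I_j$ by arguing that the $Z_j$-dependent part of $Y_j$ must be stored in $W$ because the shared model is not given $Z_j$. Yet in your achievability step the knowledge layer does supply $Z_j$ to each specialist. The comparison then becomes tools versus no tools. The separation comes from tool access, as in Theorem~\ref{thm:strict-adv-mi}, not from specialization, and the hypothesis $I_{\mathrm{model}}^{(j)}\ge I_j$ plays no role. If instead the shared controller also has tool access, as the paper's comparison with tool-using monolithic LLMs suggests, it need not store those bits and your converse collapses. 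With $I_j$ as literally defined, no single reading supports both halves of your argument.

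Only your fallback is coherent: take $I_j$ to be the stage task complexity $I(Y_j;h_j\mid X_j)$ that remains after tools supply $Z_j$, and assume $\sum_j I_j$ lower-bounds the joint requirement. That is exactly what the paper's proof tacitly does, so make it your main line and state it explicitly as a hypothesis.

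Finally, your closing sentence does not yield efficiency in a total-bits sense, since $\sum_j I_{\mathrm{model}}^{(j)}\ge\sum_j I_j> I_{\mathrm{model}}^{\mathrm{shared}}$. What you actually establish, like the paper (``feasible whenever the shared controller is not''), is feasibility under a per-model budget.
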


\begin{proof}
By the capacity bound in Theorem~\ref{thm:kolmogorov-limit} and the information-feasibility condition above, any model with capacity $B$ can represent at most $B$ bits of task-relevant information. The shared controller must internalize the entire agentic policy, whose total requirement is $I_{\mathrm{agent}} = \sum_j I_j$; if this exceeds the shared budget, exact representation is impossible. A modular design assigns only $I_j$ bits to the $j$-th specialist, so feasibility reduces to the per-stage constraints $I_{\mathrm{model}}^{(j)} \ge I_j$. Because the stage states are carried by external tools and memory, the specialists do not need to encode one another's latent information. Therefore the modular agent is feasible whenever the shared controller is not.
\end{proof}

\begin{corollary}[Training advantage]\label{cor:agentic-training}
Under the same decomposition, if the stage losses are optimized independently, then
\begin{equation}
\inf_{W_1,\dots,W_m} \sum_{j=1}^m \lambda_j \mathcal{L}_j(W_j)
\;\le\;
\inf_W \sum_{j=1}^m \lambda_j \mathcal{L}_j(W),
\end{equation}
with strict inequality whenever the stage-wise minimizers are incompatible. Thus training specialized SLMs is not only a deployment choice but the statistically optimal solution to a decomposable agentic learning problem.
\end{corollary}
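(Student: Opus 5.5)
The inequality follows from a feasible-set inclusion: the shared-parameter problem is the restriction of the modular problem to the diagonal $W_1=\dots=W_m$. Concretely, for any $W$ the tuple $(W,\dots,W)$ is admissible on the left-hand side and attains exactly $\sum_j \lambda_j \mathcal{L}_j(W)$. Hence the left infimum is at most $\sum_j \lambda_j \mathcal{L}_j(W)$ for every $W$. Taking the infimum over $W$ gives the weak inequality. This uses only the decomposability in the definition: each $\mathcal{L}_j$ depends only on its own parameter block $W_j$, so evaluating it at $W_j=W$ is meaningful. I assume all specialists and the shared model share a common parameter space; if they have different capacities, I would embed each into a common space, or restrict $W_j$ to the shared architecture class, so the diagonal embedding still makes sense.

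Because the objective separates, I will also rewrite the left side as $\sum_j \lambda_j \inf_{W_j}\mathcal{L}_j(W_j)$. Each stage is minimized independently, as the corollary's hypothesis ``optimized independently'' suggests. I assume $\lambda_j>0$ and that the infima are finite.

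For the strict part, I need to make precise ``stage-wise minimizers are incompatible''. I will take it to mean two things. First, each $\mathcal{L}_j$ attains its infimum $\ell_j^\ast$ on a set $M_j$. Second, $\bigcap_j M_j=\emptyset$ in a quantitative sense: there is $\delta>0$ such that $\sum_j \lambda_j(\mathcal{L}_j(W)-\ell_j^\ast)\ge\delta$ for all $W$. Given this, the right side is at least $\sum_j\lambda_j\ell_j^\ast+\delta$, while the left side equals $\sum_j\lambda_j\ell_j^\ast$, which gives the strict inequality.

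The main obstacle is this strictness step. Mere disjointness of the minimizer sets does not suffice in general, because the shared infimum could still be approached along a sequence without being attained. So I would first try a compactness route. I restrict $W$ to a compact set, which is natural given the finite-precision, bounded-parameter setting used in Theorem~\ref{thm:kolmogorov-limit}, and assume the $\mathcal{L}_j$ are continuous. Then the continuous function $W\mapsto\sum_j\lambda_j(\mathcal{L}_j(W)-\ell_j^\ast)$ attains its minimum. That minimum is positive exactly when no single $W$ lies in every $M_j$, which yields $\delta>0$.

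Finally, I will link this to Theorem~\ref{thm:agentic-specialization}. When $\sum_j I_j>I_{\mathrm{model}}^{\mathrm{shared}}$, no single $W$ achieves zero error on all stages, which is precisely the incompatibility condition. The ``statistically optimal'' claim then just restates that the modular infimum is the smallest attainable loss.
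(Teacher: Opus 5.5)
The paper states this corollary without any proof. The only nearby argument is the capacity-counting proof of Theorem~\ref{thm:agentic-specialization}, which never touches the loss inequality, so there is no paper route to compare against. Your argument is correct and is the natural one.

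The weak inequality is exactly the diagonal embedding $W\mapsto(W,\dots,W)$. The separable rewrite as $\sum_j\lambda_j\inf_{W_j}\mathcal{L}_j(W_j)$ is valid under your assumptions $\lambda_j>0$ and finite infima.

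Your treatment of strictness is a needed correction to the statement as written, because disjoint minimizer sets alone do not force a strict gap. For example, take $\mathcal{L}_1(w)=w^2e^{-w^2}$ and $\mathcal{L}_2(w)=(w-1)^2e^{-(w-1)^2}$ on $\mathbb{R}$, with $\lambda_1=\lambda_2=1$. The minimizer sets $\{0\}$ and $\{1\}$ are disjoint, yet both sides equal $0$, since the shared objective tends to $0$ as $w\to\infty$.

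Your $\delta$-condition on its own merely restates the strict inequality, so the real content is the attainment argument. Impose the same compact set on both sides, so that $\ell_j^\ast$ and $M_j$ are computed relative to it. Under the paper's finite-precision model the parameter set has at most $2^{B}$ elements. Every infimum is then a minimum, and $\bigcap_j M_j=\emptyset$ gives strictness directly, with no continuity assumption needed.

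Two scope remarks, both of which you partly anticipate:
\begin{itemize}
\item The diagonal embedding needs every specialist's parameter class to contain the shared model's. So the corollary only compares specialists that are at least as expressive as the single shared model. If the specialists are genuinely smaller than the shared controller, the weak inequality can fail. For instance, let the specialist class be $\{0\}$ while the shared $W$ ranges over $\mathbb{R}$, with $\mathcal{L}_j(W)=(W-1)^2$ for all $j$. Your ``embed into a common space'' fix therefore works only if the specialists are allowed to range over the whole shared class.
\item Your final link to Theorem~\ref{thm:agentic-specialization} yields incompatibility only if each stage's optimum is zero error, attained in that common space. It also inherits the heuristic character of that theorem's capacity argument. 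It is interpretation rather than part of the proof.
\end{itemize}
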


\section{Knowledge Distillation for Language Models: Rationale-Free Distillation and Its Limits} \label{sec:knowledgedistil}

In this section, we present an information-theoretic and learning-theoretic view of
knowledge distillation for language models. We define (i) distillation as a general
procedure for transferring predictive behavior from a teacher to a smaller student,
(ii) distillation \emph{without rationale} as the response-only variant that omits
intermediate reasoning traces, and (iii) the primary motivations and limitations of
this approach, particularly for small language models (SLMs).

\subsection{What We Mean by Distillation}

We consider a \emph{teacher} model $T$ and a \emph{student} model $S$, where the teacher
has higher capacity (e.g., more parameters or stronger training) than the student.
Let $X$ denote an input random variable (e.g., a prompt), and let $Y$ denote the target
output (e.g., next-token labels, class labels, or full responses). The teacher induces
a conditional distribution $p_T(y\mid x)$ and the student induces $p_S(y\mid x)$.

\paragraph{Response distribution matching.}
In standard knowledge distillation, we train $S$ to match the teacher's predictive
distribution, often via the Kullback--Leibler divergence:
\begin{equation}
\mathcal{L}_{\mathrm{KD}}(\theta_S)
\;=\;
\mathbb{E}_{x\sim \mathcal{D}}
\left[
\mathrm{KL}\big(p_T(\cdot \mid x)\;\|\;p_S(\cdot \mid x)\big)
\right],
\label{eq:kd-kl}
\end{equation}
where $\theta_S$ denotes student parameters and $\mathcal{D}$ is the training
distribution over inputs. In practice, $p_T(\cdot\mid x)$ may be approximated by
teacher logits at a temperature $\tau>0$, or by samples $y_T \sim p_T(\cdot\mid x)$.

\paragraph{Distillation as functional approximation.}
Equivalently, we may view distillation as approximating a teacher-implemented mapping
$f_T$ by a student mapping $f_S$ under a loss $\ell$:
\begin{equation}
\min_{\theta_S}\;\mathbb{E}_{(x)\sim \mathcal{D}}\left[\ell\big(f_S(x), f_T(x)\big)\right].
\label{eq:kd-function}
\end{equation}
This formulation emphasizes that the student aims to reproduce the teacher's
\emph{behavior} on a distribution of interest, not necessarily the ground-truth rule.

\subsection{Distillation Without Rationale}

Many tasks of interest (reasoning, planning, multi-step math, algorithmic manipulation)
admit a latent \emph{reasoning trace} or \emph{rationale} $R$ that mediates the mapping
from input to output. We formalize this by positing a causal factorization
\begin{equation}
X \;\longrightarrow\; R \;\longrightarrow\; Y,
\qquad\text{with}\qquad
p_T(y\mid x) = \sum_{r} p_T(y\mid r,x)\,p_T(r\mid x),
\label{eq:rationale-factorization}
\end{equation}
where $R$ may represent chain-of-thought steps, intermediate subgoals, proofs, or
subcomputations.

\begin{definition}[Distillation Without Rationale]
We say we perform \emph{distillation without rationale} when the student is trained
using only teacher outputs (or output distributions) and does \emph{not} observe any
explicit teacher rationale $R$. Concretely, training pairs are of the form
\[
(x,\; y_T)\quad\text{or}\quad(x,\; p_T(\cdot\mid x)),
\]
rather than triples $(x,\; r_T,\; y_T)$.
\end{definition}

Under rationale-free distillation, supervision provides only a projection of the
teacher's internal computation. Intuitively, we compress the teacher's potentially
rich intermediate process into a final answer, and then compress that answer-driven
mapping into a smaller parameterization.

\subsection{Why We Use Distillation Without Rationale}

We identify several motivations for distillation without rationale.

\paragraph{(i) Deployment constraints.}
We often need a student that is cheaper to store and run (lower latency, smaller
memory footprint, lower power). Distillation enables us to transfer capabilities from
a larger teacher into an SLM suited for on-device or cost-sensitive settings.

\paragraph{(ii) Data generation and labeling efficiency.}
When ground-truth labels are scarce or expensive, the teacher can produce large
amounts of synthetic supervision. Response-only distillation is operationally simple:
we require only $(x,y_T)$ pairs, without designing or verifying rationales.

\paragraph{(iii) Safety, privacy, and policy constraints.}
In many regimes, teacher rationales (e.g., detailed chain-of-thought) may be withheld
for policy reasons, or might reveal sensitive internal rules or protected information.
Response-only training avoids transmitting such traces.

\paragraph{(iv) Robustness to rationale format and faithfulness.}
Even when rationales can be produced, they may be stylistically unstable or not
faithful to the teacher's true internal computation. Using only final outputs
sidesteps reliance on potentially spurious rationale text.

\paragraph{(v) Unavailability of rationales from closed-source frontier models.}
In many cases the reasoning traces of closed-source reasoning models are unavailable. In this case the distillation has to fallback to the one without rationale and relying on the output that is visible from the LLM API. 
This makes it imperative to design efficient distillation strategies for distillation of LLM knowledge into SLMs without rationale.

\subsection{An Information-Theoretic View: What Is Lost Without Rationale}

We now formalize why rationale-free distillation can be fundamentally information-losing.

\paragraph{A bottleneck induced by projection.}
When the teacher uses an intermediate variable $R$ with $X\to R\to Y$, the output $Y$
is generally a \emph{compressed} representation of the latent computation. In particular,
unless $Y$ is a sufficient statistic for $R$ (which is rare for complex reasoning),
we have residual uncertainty about $R$ even after observing $Y$:
\begin{equation}
I(R; f^\ast \mid X, Y) \;>\; 0
\quad\text{or more directly}\quad
H(R\mid X,Y)\;>\;0.
\label{eq:residual-trace-info}
\end{equation}
Thus, by removing $R$ from supervision, we discard information that could guide the
student toward the teacher's algorithmic structure.

\paragraph{A simple upper bound on transferable information.}
If the student only observes teacher outputs, then the maximum information available
about the teacher's latent computation is bounded by the mutual information carried by
those outputs. For a fixed input $X$, the ``channel'' through which the teacher teaches
is $Y_T$ (or teacher logits). Hence, the transferable information about the teacher's
solution rule is limited by quantities of the form
\begin{equation}
I(\text{student parameters}; f^\ast)
\;\lesssim\;
I(Y_T; f^\ast \mid X),
\label{eq:transfer-bound}
\end{equation}
where $f^\ast$ denotes the task-relevant rule (or mapping class) and the inequality is
conceptual: the student cannot extract more task-relevant information than what is
present in the supervision it receives (up to optimization and finite-sample effects).

\paragraph{Double compression for SLMs.}
In rationale-free distillation, we typically incur two compressions:
\begin{enumerate}
    \item The teacher compresses internal computation $R$ into an output $Y$.
    \item The student compresses the teacher's behavior into fewer parameters.
\end{enumerate}
If the task requires high information complexity (e.g., many latent states or steps),
this double compression tends to privilege superficial heuristics over algorithmic fidelity.

\subsection{Limitations of Distillation Without Rationale}

We now state the main limitations that follow from the above view.

\paragraph{(1) Weak transfer of algorithmic structure and intermediate state.}
For tasks where correct outputs depend on maintaining intermediate state (e.g., multi-step
reasoning, symbolic manipulation, long-horizon planning), the rationale $R$ often encodes
the state transitions. When we omit $R$, the student is trained to learn a direct mapping
$X\mapsto Y$ that may be representable only via brittle shortcuts. Consequently, the student
can match teacher accuracy on typical inputs while failing on length generalization,
adversarial perturbations, or compositional variants.

\paragraph{(2) Limited compositional generalization.}
Compositional generalization frequently requires learning reusable subroutines that can be
recombined. In our framework, such subroutines correspond to structure in $R$.
Without rationale supervision, the student has no direct incentive to factor the mapping
into composable parts; it is therefore more likely to entangle features and rely on
distribution-specific correlations.

\paragraph{(3) Poor uncertainty calibration and error detectability.}
Rationales can provide internal consistency checks (e.g., verifying intermediate steps).
If we train only on final answers, the student may replicate the teacher's outputs without
learning the teacher's internal verification logic. This can yield overconfident failures,
especially in low-probability regions of the input distribution.

\paragraph{(4) Capacity bottlenecks are amplified in SLMs.}
Even if the teacher's behavior is learnable in principle, an SLM may lack the information
capacity to represent it. Let $I_{\mathrm{model}}$ denote the information capacity of the
student parameters and let $I_{\mathrm{task}}$ denote the task information complexity. When
\begin{equation}
I_{\mathrm{model}} \;<\; I_{\mathrm{task}},
\label{eq:capacity-mismatch}
\end{equation}
the student cannot fully represent the task, and rationale-free distillation cannot fix
this mismatch. In such cases, rationales, tool use, retrieval, or external memory can
provide additional channels to reduce the effective task complexity seen by the student.

\paragraph{(5) Spurious alignment to teacher idiosyncrasies.}
Since response-only distillation matches outputs rather than the underlying generative
process, the student may inherit teacher-specific quirks, biases, or systematic errors.
Rationales (when faithful) can partially mitigate this by exposing internal steps that can
be corrected or regularized.

\subsection{When Rationale-Free Distillation Is Still Appropriate}

Despite the above limitations, we note that distillation without rationale remains highly
useful when:
(i) the task is low in information complexity,
(ii) inference constraints dominate (latency/memory),
(iii) rationales are unavailable or unreliable,
or (iv) the student will be paired with external modules (retrieval, tools, verifiers)
that offload missing computation and knowledge.

\paragraph{Summary.}
We conclude that distillation without rationale is best understood as transferring a
compressed teacher behavior through a narrow supervision channel. This approach is
effective for many pattern-based or low-entropy tasks, but it is intrinsically limited
for algorithmic and out-of-domain generalization unless augmented with additional
information channels (e.g., rationales, retrieval, tool use, or verification signals).

Nonetheless, in many enterprise domains the SLMs cannot be finetuned to imitate the reasoning of the LLM due to parameter constraints and also due to lack of such rationales from closed-source models. In the next section we provide a proposal to potentially overcome some of these limitations of distilling SLMs without rationales.

\subsection{Enterprise Summary and Practical Implication}

We conclude with an observation directly relevant to enterprise deployments. In many enterprise settings,
a large fraction of operational workloads are \emph{repeatable and routine}: document (image, pdf etc.) parsing and validation. policy compliance checks,
runbook execution, incident triage workflows, knowledge-base question answering, structured ticket routing,
and domain-specific diagnostics. These tasks often admit explicit symbolic structure, for example, as
graphs, rule sets, decision trees, constraint systems, or database queries, for which a sound symbolic
reasoner can be formulated.

Importantly, we can often use a powerful LLM in a \emph{static/offline} manner to help construct this symbolic
layer: the LLM can draft candidate rules, propose schemas, generate synthetic canonical queries and code or assist in
curating the knowledge base and extraction ontology. Once these artifacts are validated and stabilized, we can
deploy an SLM primarily as a lightweight extractor front-end coupled with the symbolic reasoner and external
knowledge. Theorems~\ref{thm:strict-adv-mi} and~\ref{thm:strict-adv-comp} then imply that such hybrid systems
can be strictly more reliable than rationale-free distilled SLMs, particularly for knowledge-intensive and
algorithmic enterprise tasks.



\newpage


\end{document}